\documentclass{article}

\usepackage[utf8]{inputenc} 
\usepackage[T1]{fontenc}    
\usepackage{hyperref}       
\usepackage{url}            
\usepackage{booktabs}       
\usepackage{amsfonts}       
\usepackage{microtype}      
\usepackage{xcolor}         
\usepackage{graphicx}

\usepackage{amsmath}

\PassOptionsToPackage{numbers}{natbib}
\usepackage[preprint]{neurips_2026}

\usepackage{natbib}

\hypersetup{
    colorlinks=true,
    citecolor=black
}

\usepackage{algorithm}
\usepackage{algorithmic}

\usepackage{amsthm}
\usepackage{booktabs}
\usepackage{graphicx}
\usepackage[table]{xcolor}
\usepackage{graphicx} 

\newtheorem{theorem}{Theorem}

\newtheorem{lemma}[theorem]{Lemma}

\newtheorem{definition}[theorem]{Definition}
\newtheorem{assumption}[theorem]{Assumption}

\title{Selective Off-Policy Reference Tuning with Plan Guidance}

\author{%
Duc Anh Le\textsuperscript{1,*,\ensuremath{\dagger}}
\hspace{0.45em}
Tien-Phat Nguyen\textsuperscript{2,*}
\hspace{0.45em}
Thien Huu Nguyen\textsuperscript{3}
\hspace{0.45em}
Linh Ngo Van\textsuperscript{2}
\hspace{0.45em}
Trung Le\textsuperscript{4}
\\[0.6em]
{\normalfont\small \textsuperscript{1}Independent Author}
\\
{\normalfont\small \textsuperscript{2}Hanoi University of Science and Technology, Hanoi, Vietnam}
\\
{\normalfont\small \textsuperscript{3}University of Oregon, Eugene, OR, USA}
\\
{\normalfont\small \textsuperscript{4}Monash University, Melbourne, Australia}
}

\begin{document}

{\hypersetup{linkcolor=black}\maketitle}
\begingroup
\renewcommand{\thefootnote}{\fnsymbol{footnote}}
\footnotetext[1]{Equal contribution.}
\footnotetext[2]{Corresponding author: Duc Anh Le, \texttt{leducanh26102002@gmail.com}.}
\endgroup


\begin{abstract}
Reinforcement learning from verifiable rewards improves reasoning by reinforcing sampled solutions that receive positive outcomes, but group-relative methods such as GRPO become silent on hard prompts where every sampled rollout is wrong. These zero-reward prompts are often the most informative failures: each comes with a verified reference solution, yet uniformly imitating the full trace mixes core reasoning decisions with routine algebra, formatting, and surface wording. We propose \textbf{Selective Off-Policy Reference Tuning with Plan Guidance (SORT)}, an auxiliary repair update that leaves GRPO rollout generation unchanged. For each failed prompt, SORT extracts a reference-derived reasoning plan and scores every reference token twice under the same model, once with only the problem and once with the plan added to the context. Tokens whose probabilities rise under plan conditioning are treated as structurally informative and receive larger Dynamic Fine-Tuning updates, while tokens outside the model's support remain controlled by the base probability. We formalize this mechanism with a ground-truth plan model and prove that the SORT weight approaches an oracle structural-token weight as the extracted plan better approximates the true plan-conditioned distribution. Across three instruction-tuned backbones and eight in-distribution and out-of-distribution reasoning benchmarks, SORT consistently improves over GRPO and guidance-based baselines, with the largest gains on the weakest model where zero-reward failures are most frequent.
\end{abstract}

\section{Introduction}

RLVR has made a simple promise attractive: if correctness can be verified, a language model can improve its reasoning by sampling solutions and reinforcing the successful ones. Group Relative Policy Optimization (GRPO) operationalizes this idea by comparing rollouts for the same prompt, avoiding a learned value model while exploiting verifiable rewards in domains such as mathematics and code \citep{grpo, deepseekai2025}. But this comparison-based signal has a blind spot exactly where capability expansion is needed most: when every sampled solution to a hard prompt is wrong, there is nothing to compare, the group-relative advantage is zero, and the prompt contributes no gradient. In GRPO, the hardest failures can become invisible.

This failure mode is not a corner case of implementation; it is a consequence of sparse outcome rewards and finite sampling. A model can be close enough to benefit from RL on medium-difficulty prompts, yet too weak to sample even one correct trajectory on the prompts that would teach new reasoning patterns. These \emph{zero-reward prompts} are therefore a learning bottleneck: skipping them wastes the most valuable training signal, but standard GRPO cannot learn from them because the reward group is degenerate.

Recent work shows that guidance is necessary when exploration fails, but differs in where that guidance enters. LUFFY injects off-policy traces from stronger models \citep{luffy}; ReLIFT interleaves RL with SFT on hard demonstrations \citep{relift}; Scaf-GRPO changes rollout generation through hierarchical hints \citep{scaf_grpo}; ExPO generates answer-conditioned self-explanations as positive samples \citep{zhou2026expounlockinghardreasoning}; and curriculum or resampling methods reallocate compute away from degenerate groups \citep{xiong2025minimalistapproachllmreasoning, dapo, li2025knapsackrlunlockingexploration, chen2026nudgingboundariesllmreasoning, kimiteam2026kimik2openagentic}. These methods all help restore a learning signal, but they do so by changing the training distribution, relying on external or synthesized trajectories, or applying relatively uniform imitation to complete demonstrations.

This leaves a narrower gap than simply ``hard prompts need supervision.'' Once GRPO has produced an zero-reward group, the reference solution is useful, but using it well requires deciding \emph{which parts} of the reference should shape the student. A full solution contains core reasoning decisions mixed with surface wording, formatting, and routine algebra. Reference-level objectives can move the model toward the verified path, but they do not by themselves identify which tokens express the reasoning structure that the student failed to discover. The missing ingredient is therefore a hard-question repair signal that is simultaneously reference-supported, stable, and token-selective, while preserving the original GRPO rollout process. We target this gap by treating the reference solution not as a trajectory to copy, but as evidence for a reasoning plan from which to decide \emph{what} to learn: a reference-derived plan should reveal which reference tokens become predictable only when the underlying reasoning plan is made explicit, thereby turning a verified solution into structure-guided token-level credit.

We propose \textbf{Selective Off-Policy Reference Tuning with Plan Guidance (SORT)}, an auxiliary repair mechanism for zero-reward GRPO groups. SORT leaves the main rollout process unchanged. When a prompt produces only incorrect rollouts, we buffer its verified reference solution and extract a reference-derived reasoning plan. We then evaluate the same model twice along the reference path: once under the original prompt, and once under a plan-conditioned teacher prompt. The plan is not used for rollout or inference, but serves as a diagnostic context. Tokens whose likelihood increases significantly under plan conditioning are interpreted as carrying structural reasoning information. This teacher--student confidence gap converts the reference trajectory into a token-selective learning signal instead of a uniformly imitated sequence.

We turn this diagnostic gap into a conservative auxiliary objective: plan-conditioned relevance decides where the reference solution is informative, while a DFT-inspired probability-space update controls how strongly the model is moved toward those tokens \cite{wu2026generalizationsftreinforcementlearning}. Specifically, each reference token is weighted by the geometric mean of the student probability and the plan-conditioned teacher probability. This form preserves the stability of reference-path learning while amplifying updates on tokens supported by the inferred reasoning structure. We formalize the approach through a ground-truth plan model, where structural tokens are defined via probability gains under plan conditioning, and show that the resulting update converges to an oracle that emphasizes precisely such tokens, with approximation error governed by plan quality. As a result, SORT repairs zero-reward prompts without modifying rollout distributions, introducing external trajectories, or collapsing to standard SFT.
\paragraph{Contributions.}
We make the following contributions:
\begin{itemize}
    \item We identify zero-reward prompts as GRPO zero-advantage failures and introduce SORT, which repairs them without changing rollout generation.
    
    \item We use same-model plan conditioning only to score reference tokens, avoiding external teachers, rollout hints, or off-policy trajectory injection.
    
    \item We propose a geometric-mean token weight that combines DFT-style support awareness with plan-conditioned selectivity.
    
    \item We provide theory and experiments showing that SORT approximates oracle structural-token weighting and improves over GRPO and guidance-based baselines.
\end{itemize}

\section{Related Work}

\paragraph{RLVR and hard-question failure.}
RLVR improves reasoning with verifiable rewards, with GRPO as a common group-relative objective \citep{grpo, deepseekai2025}. Its key failure mode is the all-wrong or zero-variance group, where relative advantages collapse to zero. Prior work addresses this by shaping zero-variance advantages \citep{le2026nopromptleftbehind}, reallocating rollout budgets \citep{li2025knapsackrlunlockingexploration, nguyen2026adaptive_rollout}, or filtering/resampling near the model's competence band \citep{xiong2025minimalistapproachllmreasoning, dapo, chen2026nudgingboundariesllmreasoning, kimiteam2026kimik2openagentic}. SORT instead keeps all-wrong prompts and converts their verified references into auxiliary token-level learning signals.

\paragraph{Guidance-based repair for hard reasoning.}
Guidance methods restore signal by changing the training context or trajectories: LUFFY mixes stronger off-policy traces \citep{luffy}; SAGE, Guide, and Scaf-GRPO inject hints or scaffolds \citep{sage, nath2025adaptive_guidance, scaf_grpo}; SGPO and ExPO add stepwise guidance or answer-conditioned explanations \citep{chen2026stepwiseguidedpolicyoptimization, zhou2026expounlockinghardreasoning}; and ReLIFT interleaves supervised hard-example updates \citep{relift}. Related structured-repair methods use targeted exploration, online self-verification, or decomposed rubric rewards \citep{zheng2025fr3e, liu2025rise, gangwani2026rubric_reward}. SORT differs by leaving rollout generation unchanged and using verified references only for structure-guided, token-selective repair.

\paragraph{Reference learning and token-selective supervision.}
SORT also relates to reference-learning and process-supervision objectives. DFT reweights SFT in probability space \citep{wu2026generalizationsftreinforcementlearning}, SDFT uses a demonstration-conditioned same-model teacher \citep{shenfeld2026selfdistillationenablescontinuallearning}, and RLPR uses reference-answer likelihoods as verifier-free rewards \citep{yu2025rlpr}. Process-supervision methods assign value to partial reasoning states via step, segment, or implicit process signals \citep{zhang2025lessons_prm, dai2025sgrpo, yuan2024free_process_rewards, cui2025prime}; SORT instead avoids a separate process verifier or forced early exits, using same-model plan conditioning to upweight structurally informative reference tokens. This aligns with token-level analyses showing that useful RL signal concentrates on sparse decision-critical tokens \citep{wang2025beyond8020rule, vassoyan2025ignoreklpenalty}.

    \section{Methodology}
    \label{sec:method}

    We introduce \textbf{Selective Off-Policy Reference Tuning with Plan Guidance (SORT)}, an auxiliary reference update that converts zero-reward GRPO prompts into token-selective learning signals while leaving rollout generation unchanged.

    \subsection{From Zero-Reward Failures to Deferred Repair}

    Verified references are valuable on hard prompts, but copying them wholesale is the wrong repair mechanism: cross-entropy SFT can narrow the policy and full reasoning traces mix decision-critical steps with routine algebra and wording \citep{li2025preservingdiversitysft,chen2026sedsft}. GRPO avoids this uniform off-policy imitation by learning from sampled rollouts, yet it becomes silent when every rollout is wrong.

    Consider a GRPO training iteration with rollout batch $\mathcal{U}_k = \{q_i\}_{i=1}^{B_{\mathrm{rl}}}$. For each prompt $q_i$, the model samples $N$ responses and receives binary verifiable rewards $r_i^{(n)}\in\{0,1\}$. GRPO assigns each sampled response a normalized group advantage,
    \begin{equation}
    \bar r_i = \frac{1}{N}\sum_{m=1}^{N} r_i^{(m)},\qquad
    s_i = \sqrt{\frac{1}{N}\sum_{m=1}^{N}(r_i^{(m)}-\bar r_i)^2},\qquad
    A_i^{(n)} = \frac{r_i^{(n)}-\bar r_i}{s_i+\epsilon}.
    \end{equation}
    If every rollout is incorrect, then
    \begin{equation}
    r_i^{(1)}=\cdots=r_i^{(N)}=0
    \;\Longrightarrow\;
    \bar r_i=s_i=0,\quad A_i^{(n)}=0\ \forall n,
    \end{equation}
    so all policy-gradient terms for $q_i$ vanish. We denote these zero-reward indices by $\mathcal{I}_{\mathrm{aw}}^{(k)}=\{i:r_i^{(n)}=0,\forall n\in[N]\}$. They are not noise---they mark the policy frontier, where a single missed reasoning decision can make an otherwise fluent rollout fail. The verified reference $y_i^\star$ contains the missing structure, but interleaved with routine derivation. The bottleneck is \emph{selectivity}: learn from the tokens that reveal the missing structure without collapsing into uniform imitation~\citep{wu2026generalizationsftreinforcementlearning}. Appendix~\ref{app:hard_examples} gives representative zero-reward prompts.

    SORT resolves this by deferring repair. Failed prompts are neither skipped nor merged into GRPO; they are \textbf{buffered} for a separate auxiliary update:
    \begin{equation}
    \mathcal{B}_{\mathrm{aw}} \leftarrow \operatorname{Enqueue}\!\bigl(\mathcal{B}_{\mathrm{aw}},\; \{(q_i, y_i^\star) : i \in \mathcal{I}_{\mathrm{aw}}^{(k)}\}\bigr).
    \label{eq:enqueue}
    \end{equation}
    When $|\mathcal{B}_{\mathrm{aw}}| \ge B_{\mathrm{aux}}$, a minibatch $\mathcal{M}_k \sim \operatorname{SampleBatch}(\mathcal{B}_{\mathrm{aw}}, B_{\mathrm{aux}})$ triggers one SORT update. GRPO continues learning from prompts with reward variance; SORT repairs the hard frontier.

    \subsection{Plan-Conditioned Token Salience}
    \label{sec:plan_conditioned_salience}

    For a buffered pair $(q_i, y_i^\star)$, SORT asks which reference tokens should exert the strongest learning pressure. The guiding principle is that a token should be emphasized when it becomes substantially more predictable once the model is given the ground-truth reasoning plan behind the verified solution.

    \begin{definition}[Ground-truth solution plan]
    \label{def:ground_truth_solution_plan}
    For a buffered example $(q_i,y_i^\star)$ with
    $y_i^\star=(y_{i,1}^\star,\ldots,y_{i,T_i}^\star)$, let
    $S_i$ denote the \emph{ground-truth solution plan}: an idealized ordered outline of the key reasoning decisions sufficient to derive the reference solution from the prompt. It records structural choices such as representations, lemmas, case splits, and constraint propagation. We do not assume that $S_i$ is observed, optimized as a target, or sampled by the model; it is used only as an analysis device. Conditioned on a fixed plan $S_i$, the reference factorizes as
    \begin{equation}
    \pi_\theta(y_i^\star \mid q_i, S_i)
    =
    \prod_{t=1}^{T_i}
    \pi_\theta(y_{i,t}^\star \mid q_i, S_i, y_{i,<t}^\star).
    \label{eq:gt_plan_factorization}
    \end{equation}
    \end{definition}

    Appendix~\ref{app:ground_truth_plan_example} illustrates the idealized ground-truth plan used in the analysis.

    \begin{definition}[Plan salience]
    \label{def:plan_salience}
    The ground-truth plan salience of reference token $y_{i,t}^\star$ is
    \begin{equation}
    \sigma_{i,t} := \log\frac{\pi_\theta(y_{i,t}^\star \mid q_i, S_i, y_{i,<t}^\star)}{\pi_\theta(y_{i,t}^\star \mid q_i, y_{i,<t}^\star)}.
    \label{eq:plan_salience}
    \end{equation}
    Equivalently, this log-ratio is the conditional pointwise mutual information $\operatorname{PMI}_\theta(y_{i,t}^\star; S_i \mid q_i, y_{i,<t}^\star)$, which clarifies its role. Large positive values ($\sigma_{i,t} \gg 0$) indicate that the plan contributes substantial information: the base model cannot reliably predict the token from the problem and prefix alone, but becomes confident once the step-by-step plan is revealed. These correspond to reasoning-critical steps—e.g., invariant selection, case splits, or constraint propagation—that determine solution correctness. In contrast, tokens with $\sigma_{i,t} \approx 0$ are routine (algebraic manipulation or phrasing) and largely independent of the plan. SORT therefore ignores the near-zero regime and selectively upweights tokens with large positive $\sigma_{i,t}$.    \end{definition}
        
    If the ground-truth plan $S_i$ were available, $\sigma_{i,t}$ would identify reference tokens that are difficult yet learnable given the correct reasoning steps. Since only the reference solution is available, SORT approximates this conditioning with a reference-derived plan, scoring each token twice under the same model: once with only the problem and once with the extracted plan in context.
    \paragraph{Reference-derived plan.}
    For each buffered example, we ask the current model to extract a plan-like conditioning context from the verified reference:
    \begin{equation}
    \hat{s}_i
    =
    \operatorname{Gen}_\theta
    \!\left(
    \textsc{PlanExtract}(y_i^\star)
    \right).
    \label{eq:plan_extract}
    \end{equation}
    The extracted plan $\hat{s}_i$ is not used as a target sequence and is never used at inference time. It serves only as a diagnostic context for scoring the reference. Its role is to make explicit the decisions, intermediate realizations, and calculation context that support the verified solution, so that plan-dependent reference tokens become easier to predict under conditioning. Prompt templates are given in Appendix~\ref{app:prompt_templates}.
    
\paragraph{Two reference-path evaluations.}
We evaluate the same reference tokens under two prefixes: the base prefix contains only the problem, while the plan-conditioned teacher prefix additionally contains $\hat{s}_i$:
\begin{equation}
\begin{aligned}
p^{\mathrm{base}}_{i,t}
&=
\pi_\theta\!\left(
y_{i,t}^\star
\mid
\textsc{BasePrompt}(q_i), y_{i,<t}^\star
\right),
&
p^{\mathrm{plan}}_{i,t}
&=
\pi_\theta\!\left(
y_{i,t}^\star
\mid
\textsc{TeacherPrompt}(q_i,\hat{s}_i), y_{i,<t}^\star
\right).
\end{aligned}
\label{eq:base_plan_prob}
\end{equation}
Both evaluations use the same model and reference path; only the conditioning context differs. In the auxiliary loss, the base branch receives gradient, while the plan-conditioned branch is detached.

    \paragraph{Plan-conditioned log-ratio.}
    Using the extracted plan $\hat{s}_i$ as a practical proxy for the ground-truth plan $S_i$ in the log-ratio gives an estimated token-level score:
    \begin{equation}
    \hat{\sigma}_{i,t}
    :=
    \log p^{\mathrm{plan}}_{i,t}
    -
    \log p^{\mathrm{base}}_{i,t}.
    \label{eq:plan_log_ratio}
    \end{equation}
    When the extracted plan induces conditioning close to the ground-truth plan ($\hat{s}_i \approx S_i$), this plug-in estimator approximates the ideal plan salience: $\hat{\sigma}_{i,t} \approx \sigma_{i,t}$. If $\hat{\sigma}_{i,t}$ is large and positive, the reference token becomes much easier to predict once the plan is supplied, indicating it encodes a reasoning step the base policy is missing. SORT interprets such tokens as more informative for updating the base policy and upweights them accordingly.

    \subsection{Geometric Reference Weighting}
    \label{sec:geometric_reference_weighting}

    We use lowercase $\ell_{i,t}$ for a single-token loss and uppercase $\mathcal{L}$ for an objective aggregated over a token set or minibatch.
    Dynamic Fine-Tuning (DFT) \citep{wu2026generalizationsftreinforcementlearning} stabilizes reference-path learning by replacing the unit SFT weight with the detached base probability,
    \[
    \ell^{\mathrm{DFT}}_{i,t}
    =
    -\operatorname{sg}\!\left(p^{\mathrm{base}}_{i,t}\right)
    \log p^{\mathrm{base}}_{i,t}.
    \]
    Its gradient can be written in policy-gradient form as
    \begin{equation}
    \nabla_\theta \ell^{\mathrm{DFT}}_{i,t}
    =
    -\nabla_\theta p^{\mathrm{base}}_{i,t}
    =
    -\mathbb{E}_{z\sim \pi_\theta(\cdot\mid q_i, y_{i,<t}^\star)}
    \!\left[
    \underbrace{\mathbf{1}\{z=y_{i,t}^\star\}}_{\text{reward}=1}\,
    \nabla_\theta \log \pi_\theta(z\mid q_i, y_{i,<t}^\star)
    \right].
    \label{eq:dft_as_rl}
    \end{equation}
    Every reference token receives a uniform reward of~1, which stabilizes training but ignores the distinction between reasoning-critical and routine tokens. SORT generalizes this by replacing the uniform reward with a plan-conditioned one.

    \paragraph{From uniform reward to plan-conditioned reward.}
    By Definition~\ref{def:plan_salience}, the plan salience $\sigma_{i,t}=\log\bigl(\pi_\theta(y_{i,t}^\star\mid q_i,S_i,y_{i,<t}^\star)\,/\,p^{\mathrm{base}}_{i,t}\bigr)$ measures how much the oracle plan $S_i$ increases the token probability. We replace DFT's uniform reward with the tempered plan-conditioned reward:
    \begin{equation}
    \rho^\star_{\beta,i,t}
    =
    \exp(\beta \sigma_{i,t}),
    \qquad
    \beta \in [0,1].
    \label{eq:oracle_structural_reward}
    \end{equation}
    Setting $\beta=0$ recovers DFT; any $\beta>0$ upweights tokens that become more predictable once the plan is revealed ($\rho^\star_{\beta,i,t}>1$ iff $\sigma_{i,t}>0$). Inserting this reward into the DFT gradient gives
    \begin{align}
    g^\star_{\beta,i,t}
    &=
    -\rho^\star_{\beta,i,t}\,\nabla_\theta p^{\mathrm{base}}_{i,t}
    =
    -\mathbb{E}_{z\sim \pi_\theta(\cdot\mid q_i, y_{i,<t}^\star)}
    \!\left[
    \mathbf{1}\{z=y_{i,t}^\star\}\,
    \rho^\star_{\beta,i,t}\,
    \nabla_\theta \log \pi_\theta(z\mid q_i, y_{i,<t}^\star)
    \right]
    \nonumber\\
    &=
    \nabla_\theta
    \left[
    -\operatorname{sg}
    \!\left(
    p^{\mathrm{base}}_{i,t}\,\rho^\star_{\beta,i,t}
    \right)
    \log p^{\mathrm{base}}_{i,t}
    \right],
    \label{eq:oracle_structural_gradient}
    \end{align}
    Reading off the surrogate loss, the oracle token weight and the corresponding oracle objective are
    \begin{align}
    \omega^\star_{\beta,i,t}
    &=
    \operatorname{sg}
    \!\left(
    p^{\mathrm{base}}_{i,t}\,
    \exp(\beta \sigma_{i,t})
    \right),
    \label{eq:oracle_weight}
    \\
    \mathcal{L}^\star(\mathcal{M}_k)
    &=
    -\frac{1}{|\mathcal{S}(\mathcal{M}_k)|}
    \sum_{(i,t)\in \mathcal{S}(\mathcal{M}_k)}
    \omega^\star_{\beta,i,t}\,
    \log\pi_\theta
    \!\left(
    y_{i,t}^\star \mid \textsc{BasePrompt}(q_i), y_{i,<t}^\star
    \right).
    \label{eq:oracle_loss}
    \end{align}
    If the ground-truth plan $S_i$ were known, $\mathcal{L}^\star$ would be the ideal training objective. In practice $\sigma_{i,t}$ is unavailable, so we approximate it.

    \paragraph{From oracle to practical weight.}
    Replacing $\sigma_{i,t}$ with the estimated log-ratio $\hat{\sigma}_{i,t}$ from Eq.~\eqref{eq:plan_log_ratio} gives the practical SORT weight family:
    \begin{equation}
    \omega_{\beta,i,t}
    =
    \operatorname{sg}
    \!\left(
    p^{\mathrm{base}}_{i,t}\,
    \exp(\beta\,\hat{\sigma}_{i,t})
    \right)
    =
    \operatorname{sg}
    \!\left(
    \left(p^{\mathrm{base}}_{i,t}\right)^{1-\beta}
    \left(p^{\mathrm{plan}}_{i,t}\right)^{\beta}
    \right),
    \qquad \beta \in [0,1].
    \label{eq:sort_weight_family}
    \end{equation}
    At $\beta=0$ the weight reduces to $\operatorname{sg}(p^{\mathrm{base}}_{i,t})$, recovering DFT. As $\beta$ increases, the plan-conditioned probability $p^{\mathrm{plan}}_{i,t}$ gains influence: tokens whose probability rises under plan conditioning are upweighted, while those unaffected or hurt are unchanged or downweighted. We set $\beta=\tfrac{1}{2}$ throughout, which yields the geometric-mean weight and the minibatch objective:
    \begin{align}
    \omega_{i,t}
    &=
    \operatorname{sg}
    \!\left(
    \sqrt{p^{\mathrm{base}}_{i,t}\,p^{\mathrm{plan}}_{i,t}}
    \right),
    \label{eq:sort_weight}
    \\
    \mathcal{L}_{\mathrm{SORT}}(\mathcal{M}_k)
    &=
    -\frac{1}{|\mathcal{S}(\mathcal{M}_k)|}
    \sum_{(i,t)\in \mathcal{S}(\mathcal{M}_k)}
    \omega_{i,t}\,
    \log\pi_\theta
    \!\left(
    y_{i,t}^\star \mid \textsc{BasePrompt}(q_i), y_{i,<t}^\star
    \right),
    \label{eq:sort_loss}
    \end{align}
    where $\mathcal{S}(\mathcal{M}_k) = \{(i,t) : (q_i,y_i^\star)\in \mathcal{M}_k,\; 1 \le t \le |y_i^\star|\}$ and only the base branch receives gradient. Tokens that become more predictable under plan conditioning ($p^{\mathrm{plan}}_{i,t} > p^{\mathrm{base}}_{i,t}$) are upweighted relative to DFT; tokens made less predictable are downweighted.
    \subsection{Theoretical Analysis}
    \label{sec:theoretical_analysis}

    The weighting scheme above relies on the extracted plan $\hat{s}_i$ in place of the unknown ground-truth plan $S_i$. The plan mismatch $\gamma$ below should be small when the policy has reasonable planning ability: as reasoning improves, extracted plans should better preserve high-level solution structure, as illustrated qualitatively in Appendix~\ref{app:plan_extraction_example}. Thus the bound is small under faithful extraction and weakens when the extractor misses key structural decisions.

    \begin{assumption}[Plan-induced stability]
    \label{assump:stability}
    Let $h_\theta(q, y_{<t}^\star, s)$ denote the decoder hidden state at step $t$ conditioned on plan $s$. We assume that for any $\hat{s}_i, S_i$ and all token positions $t$,
    \[
    \| h_\theta(q_i, y_{i,<t}^\star, \hat{s}_i) - h_\theta(q_i, y_{i,<t}^\star, S_i) \| \le \gamma,
    \]
    for some $\gamma \ge 0$.
    \end{assumption}

    \begin{assumption}[Bounded score]
    \label{assump:bounded_score}
    The token-level score function along the reference path is uniformly bounded:
    \[
    \bigl\|\nabla_\theta \log \pi_\theta(y_{i,t}^\star \mid q_i, y_{i,<t}^\star)\bigr\| \le G
    \qquad \forall (i,t).
    \]
    \end{assumption}

    \begin{lemma}[Plan-to-token smoothness]
    \label{lem:plan_to_token}
    Assume the output mapping $h \mapsto \log \pi_\theta(\cdot \mid h)$ is $L$-Lipschitz. Under Assumption~\ref{assump:stability}, we have
    \[
    \bigl|\log\pi_\theta(y_{i,t}^\star \mid \hat{s}_i, q_i, y_{i,<t}^\star)
    -
    \log\pi_\theta(y_{i,t}^\star \mid S_i, q_i, y_{i,<t}^\star)\bigr|
    \le L \gamma.
    \]
    \end{lemma}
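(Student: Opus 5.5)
The argument is a direct composition of the two hypotheses. The only real work is making precise what ``the output mapping is $L$-Lipschitz'' means for a single coordinate of the log-probability vector.

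First, rewrite both sides through the decoder hidden state. By the architecture of $\pi_\theta$, the next-token distribution at step $t$ depends on the context $(s, q_i, y_{i,<t}^\star)$ only through $h_\theta(q_i, y_{i,<t}^\star, s)$. Hence
\[
\log\pi_\theta(y_{i,t}^\star \mid s, q_i, y_{i,<t}^\star) = \bigl[\Phi\bigl(h_\theta(q_i, y_{i,<t}^\star, s)\bigr)\bigr]_{y_{i,t}^\star},
\]
where $\Phi(h) := \log\pi_\theta(\cdot\mid h)$ is the vector of log-probabilities, e.g.\ log-softmax of the unembedding $Wh$. We apply this with $s=\hat{s}_i$ and $s=S_i$, writing $\hat h := h_\theta(q_i, y_{i,<t}^\star, \hat{s}_i)$ and $h^\star := h_\theta(q_i, y_{i,<t}^\star, S_i)$.

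Second, apply Lipschitz continuity to the pair $(\hat h, h^\star)$:
\[
\|\Phi(\hat h) - \Phi(h^\star)\| \le L\,\|\hat h - h^\star\|.
\]
Any single coordinate is bounded by the vector norm. This holds with the $\ell_\infty$ norm on the output, and a fortiori with $\ell_2$, since $|v_y|\le\|v\|_\infty\le\|v\|_2$. Therefore the difference of the two log-probabilities at the realized token $y_{i,t}^\star$ is at most $L\|\hat h - h^\star\|$.

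Third, invoke Assumption~\ref{assump:stability}, which gives $\|\hat h - h^\star\|\le\gamma$ uniformly in $t$. Chaining the inequalities yields the stated bound $L\gamma$ for every token position.

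The only delicate step is the interpretation of the Lipschitz hypothesis: we need it in the sense of a vector-valued map, with respect to a norm on outputs that dominates individual coordinates. This is reasonable for log-softmax composed with a linear head. The coordinate map $h\mapsto (Wh)_y - \operatorname{logsumexp}(Wh)$ has gradient $W^\top(e_y - \operatorname{softmax}(Wh))$, of norm at most $2\|W\|_{2\to\infty}$ (the largest row norm of $W$). So one may take $L\le 2\|W\|_{2\to\infty}$ if a concrete constant is wanted. The rest is the triangle-free two-line chain above.
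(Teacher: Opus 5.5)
Your proof is correct and follows the paper's own argument: apply the $L$-Lipschitz property of $h\mapsto\log\pi_\theta(\cdot\mid h)$ to the two hidden states, then bound their distance by $\gamma$ via Assumption~\ref{assump:stability}. Your additional points are also correct refinements that the paper leaves implicit: a single log-probability coordinate is dominated by the output norm, and a log-softmax head gives the explicit constant $L\le 2\|W\|_{2\to\infty}$.
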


    Applying Lemma~\ref{lem:plan_to_token} and the mean value theorem yields the following regret bound.

    \begin{theorem}[Plan-regret bound]
    \label{thm:regret_bound}
    Under Assumption~\ref{assump:stability}, for any $\beta \in [0,1]$, the per-token SORT weight error satisfies
    \begin{equation}
    \boxed{\;
    \bigl|\omega_{\beta,i,t} - \omega_{\beta,i,t}^\star\bigr|
    \;\le\;
    \beta L\gamma \left(p^{\mathrm{base}}_{i,t}\right)^{1-\beta},
    \;}
    \end{equation}
    where $\omega^\star_{\beta,i,t}$ is the oracle weight from Eq.~\eqref{eq:oracle_weight}. In particular, for the geometric-mean setting $\beta=\tfrac{1}{2}$,
    \[
    \bigl|\omega_{i,t} - \omega_{i,t}^\star\bigr|
    \le
    \frac{L\gamma}{2} \sqrt{p^{\mathrm{base}}_{i,t}}.
    \]
    Under Assumption~\ref{assump:bounded_score}, aggregated over the auxiliary minibatch,
    \begin{equation}
    \boxed{\;
    \bigl\|
    \nabla_\theta \mathcal{L}_{\mathrm{SORT},\beta} - \nabla_\theta \mathcal{L}^\star_\beta
    \bigr\|
    \;\le\;
    \beta L\gamma \cdot \frac{1}{|\mathcal{S}|} \sum_{(i,t)\in\mathcal{S}} \left(p^{\mathrm{base}}_{i,t}\right)^{1-\beta}\; \bigl\| \nabla_\theta \log p^{\mathrm{base}}_{i,t} \bigr\|
    \;\le\;
    \beta L\gamma G.
    \;}
    \end{equation}
    \end{theorem}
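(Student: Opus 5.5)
The plan is to isolate the only place where the practical weight and the oracle weight differ, namely the plan-conditioned factor, and then control that factor with Lemma~\ref{lem:plan_to_token}. Write $a_{i,t} := \log p^{\mathrm{plan}}_{i,t} = \log\pi_\theta(y_{i,t}^\star \mid \hat{s}_i, q_i, y_{i,<t}^\star)$ and $b_{i,t} := \log\pi_\theta(y_{i,t}^\star \mid S_i, q_i, y_{i,<t}^\star)$. Here I identify the teacher-prompt context with conditioning on $\hat{s}_i$, as the paper does implicitly. Then $\sigma_{i,t} = b_{i,t} - \log p^{\mathrm{base}}_{i,t}$ and $\hat{\sigma}_{i,t} = a_{i,t} - \log p^{\mathrm{base}}_{i,t}$. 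By Eq.~\eqref{eq:oracle_weight} and Eq.~\eqref{eq:sort_weight_family}, both weights then factor as
\[
\omega_{\beta,i,t} = \bigl(p^{\mathrm{base}}_{i,t}\bigr)^{1-\beta} e^{\beta a_{i,t}},
\qquad
\omega^\star_{\beta,i,t} = \bigl(p^{\mathrm{base}}_{i,t}\bigr)^{1-\beta} e^{\beta b_{i,t}}.
\]
The stop-gradient plays no role at the level of values. Consequently $|\omega_{\beta,i,t}-\omega^\star_{\beta,i,t}| = (p^{\mathrm{base}}_{i,t})^{1-\beta}\,|e^{\beta a_{i,t}} - e^{\beta b_{i,t}}|$.

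Next I apply the mean value theorem to $x\mapsto e^{\beta x}$ on the interval between $a_{i,t}$ and $b_{i,t}$. This gives $|e^{\beta a}-e^{\beta b}| = \beta e^{\beta \xi}|a-b|$ for some $\xi$ between $a$ and $b$. The key observation is that both $a$ and $b$ are log-probabilities, hence $\le 0$. So $\xi\le 0$, and since $\beta \ge 0$ we get $e^{\beta\xi}\le 1$. Lemma~\ref{lem:plan_to_token}, which holds under Assumption~\ref{assump:stability} with the $L$-Lipschitz output map, gives $|a-b|\le L\gamma$. Combining these yields the first boxed bound $\beta L\gamma (p^{\mathrm{base}}_{i,t})^{1-\beta}$. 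Substituting $\beta=\tfrac12$ gives the geometric-mean special case directly.

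For the gradient bound I differentiate the two surrogate objectives. Because the weights are detached, the gradients are
\[
\nabla_\theta\mathcal{L}_{\mathrm{SORT},\beta} = -\tfrac{1}{|\mathcal{S}|}\sum_{(i,t)\in\mathcal{S}} \omega_{\beta,i,t}\nabla_\theta\log p^{\mathrm{base}}_{i,t},
\]
and analogously for $\mathcal{L}^\star_\beta$ with $\omega^\star_{\beta,i,t}$. The difference is therefore $-\tfrac{1}{|\mathcal{S}|}\sum (\omega_{\beta,i,t}-\omega^\star_{\beta,i,t})\nabla_\theta\log p^{\mathrm{base}}_{i,t}$. Applying the triangle inequality and the per-token bound gives the middle expression of the second box. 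Finally, $(p^{\mathrm{base}}_{i,t})^{1-\beta}\le 1$ for $\beta\in[0,1]$, and Assumption~\ref{assump:bounded_score} gives $\|\nabla_\theta\log p^{\mathrm{base}}_{i,t}\|\le G$; averaging then yields $\beta L\gamma G$.

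The main obstacle is the mean-value step, in particular getting a constant of exactly $\beta$ with no extra exponential factor. A naive Lipschitz bound on $e^{\beta x}$ would introduce $e^{\beta\max(a,b)}$, which is uncontrolled in general. The argument depends on noticing that both arguments are non-positive log-probabilities, which caps that factor at $1$. A secondary point to state carefully is that $\omega_{\beta,i,t}$ is evaluated with the same $\theta$ and the same prefix as $\omega^\star_{\beta,i,t}$, so that the $(p^{\mathrm{base}}_{i,t})^{1-\beta}$ factor is genuinely common to both weights and cancels out of the difference.
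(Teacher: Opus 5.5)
Your proposal is correct and matches the paper's proof essentially step for step. Both factor out the common $(p^{\mathrm{base}}_{i,t})^{1-\beta}$, apply the mean value theorem to $x\mapsto e^{\beta x}$ using that both log-probabilities are non-positive (so $e^{\beta\xi}\le 1$), invoke Lemma~\ref{lem:plan_to_token} for $|a_{i,t}-b_{i,t}|\le L\gamma$, and then obtain the gradient bound from the detached-weight gradient, the triangle inequality, $(p^{\mathrm{base}}_{i,t})^{1-\beta}\le 1$, and Assumption~\ref{assump:bounded_score}.
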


    The bound carries three messages. First, as $\gamma \to 0$ the SORT gradient converges to the oracle gradient at rate $\mathcal{O}(\gamma)$. Second, the factor $\left(p^{\mathrm{base}}_{i,t}\right)^{1-\beta}$ automatically suppresses the weight error on tokens the model finds difficult ($p^{\mathrm{base}}_{i,t} \ll 1$), inheriting DFT's stabilization; for the midpoint choice $\beta=\tfrac{1}{2}$ this becomes $\sqrt{p^{\mathrm{base}}_{i,t}}$. Third, under bounded scores, the aggregated gradient deviation is at most $\beta L\gamma G$, independent of sequence length or vocabulary size. In short, SORT degrades gracefully: it exploits structural information when the plan is faithful, and falls back to DFT-like behavior otherwise. Proofs are deferred to Appendix~\ref{app:regret_bound}.
    \subsection{Interleaved Optimization}
    \label{sec:interleaved_optimization}

    SORT is applied as an auxiliary step interleaved with standard GRPO: each iteration performs a GRPO update on the rollout batch, zero-reward prompts are buffered, and when the buffer reaches size $B_{\mathrm{aux}}$, a minibatch triggers one SORT update. This decouples on-policy exploration from reference consolidation. The full procedure is presented in Algorithm~\ref{alg:sort}.

\section{Experiments}
\label{sec:experiments}

\subsection{Experimental Setup}

\paragraph{Models, data, and evaluation.}
We assess SORT on Llama-3.2-3B-Instruct \citep{llama3}, Qwen2.5-7B-Instruct \citep{qwen25technicalreport}, and Qwen3-4B-Instruct \citep{qwen3technicalreport}. Training uses a 15k-prompt subset of OpenR1-Math-220k, curated by Liao et al. \citep{sage}, derived from NuminaMath 1.5 \citep{li2024numinamath}, and paired with ground-truth answers and reference solutions, without pass-rate filtering. We report in-distribution performance on AIME24/25 \citep{aime2024,aime2025}, AMC23 \citep{amc23}, MATH-500 \citep{hendrycks2021measuring}, Minerva Math \citep{minerva2022}, and OlympiadBench \citep{olympiadbench}, and out-of-distribution performance on GPQA-diamond \citep{gpqa} and MMLU-Pro \citep{mmlupro}. Unless otherwise noted, all metrics are avg@16 accuracy.

\paragraph{Baselines and implementation.}
We compare SORT with SFT on DeepSeek-R1 traces, vanilla GRPO, LUFFY, ReLIFT, and Scaf-GRPO. All guidance baselines are reproduced on the same 15k training subset with matched batch size and optimization steps. SORT uses only same-model plans to score reference tokens and does not modify rollout generation. All methods are trained on 8 H100 GPUs using \texttt{verl} \citep{Sheng_2025} and vLLM \citep{kwon2023efficientmemorymanagementlarge}; following DAPO \citep{dapo}, we disable KL regularization and use asymmetric clipping with \((\epsilon_{\mathrm{low}},\epsilon_{\mathrm{high}})=(0.2,0.28)\). Unless otherwise specified, we use a learning rate of \(1\times10^{-6}\), maximum response length 8192, batch size 128, 8 trajectories per prompt, and 500 training steps, we save every 50 steps. For Qwen3-4B-Instruct, we use 4 trajectories per prompt  due to slower training caused by its long response length.

\paragraph{Buffer size.}
We set the SORT buffer size from the early all-wrong count in Appendix~\ref{app:all_wrong_fraction}, using the largest power of two below that count so the buffer fills quickly. This gives buffer sizes of \(64\), \(16\), and \(8\) for Llama-3.2-3B-Instruct, Qwen2.5-7B-Instruct, and Qwen3-4B-Instruct, respectively.
\begin{table*}[t]
\centering
\caption{Main results on in-distribution math benchmarks and out-of-distribution reasoning benchmarks. All numbers are avg@16 accuracy. In the AIME24/25 column, the two values correspond to AIME24 and AIME25, respectively. $\Delta$ is measured against the corresponding base model.}
\label{tab:main_results}
\setlength{\tabcolsep}{4.5pt}
\renewcommand{\arraystretch}{1.08}
\definecolor{oursrow}{RGB}{238,243,250}

\small
\resizebox{\textwidth}{!}{%
\begin{tabular}{lccccccccccc}
\toprule
& \multicolumn{7}{c}{\textbf{In-distribution}}
& \multicolumn{4}{c}{\textbf{Out-of-distribution}} \\
\cmidrule(lr){2-8} \cmidrule(lr){9-12}
\textbf{Method}
& \textbf{AIME24/25} & \textbf{AMC23} & \textbf{MATH-500} & \textbf{Minerva} & \textbf{Olympiad} & \textbf{Avg.} & $\Delta$
& \textbf{GPQA} & \textbf{MMLU-Pro} & \textbf{Avg.} & $\Delta$ \\
\midrule

\textit{Llama-3.2-3B-Instruct}
& 6.5/0.6 & 22.8 & 44.7 & 17.8 & 14.2 & 17.8 & 0
& 17.9 & 27.0 & 22.5 & 0 \\
SFT
& 0.4/0.6 & 9.5  & 26.9 & 5.1  & 6.5  & 8.2  & -9.6
& 11.6 & 18.8 & 15.2 & -7.3 \\
GRPO
& 6.7/0.8 & \underline{29.5} & \underline{52.1} & \underline{20.5} & \underline{21.8} & \underline{21.9} & \underline{+4.1}
& \underline{26.3} & \underline{39.8} & \underline{33.1} & \underline{+10.6} \\
LUFFY
& 4.4/0.4 & 18.6 & 38.9 & 14.3 & 11.9 & 14.7 & -3.1
& 16.0 & 26.7 & 21.4 & -1.1 \\
ReLIFT
& 0.4/0.2 & 17.8 & 36.8 & 10.4 & 10.0 & 12.6 & -5.2
& 11.7 & 26.5 & 19.1 & -3.4 \\
Scaf-GRPO
& \underline{7.7}/\textbf{2.3} & 28.8 & 51.7 & 19.4 & 19.5 & 21.5 & +3.7
& 24.1 & 38.0 & 31.0 & +8.5 \\
\rowcolor{oursrow}
\textbf{Ours}
& \textbf{8.8}/\underline{1.7} & \textbf{32.7} & \textbf{56.0} & \textbf{21.6} & \textbf{22.4} & \textbf{23.9} & \textbf{+6.1}
& \textbf{26.4} & \textbf{41.2} & \textbf{33.8} & \textbf{+11.3} \\
\midrule

\textit{Qwen2.5-7B-Instruct}
& 13.8/6.7 & 53.4 & 75.7 & 38.1 & 39.2 & 37.8 & 0
& 37.1 & 56.4 & 46.7 & 0 \\
SFT
& 3.5/7.1 & 30.9 & 56.2 & 20.0 & 21.7 & 23.2 & -14.6
& 9.5  & 35.6 & 22.5 & -24.2 \\
GRPO
& 15.0/13.5 & 55.5 & 79.2 & 39.1 & \underline{44.5} & 41.1 & +3.3
& 37.2 & 57.6 & 47.4 & +0.7 \\
LUFFY
& \textbf{17.1}/13.5 & 55.2 & \textbf{81.3} & 39.0 & 44.2 & 41.7 & +3.9
& \underline{38.1} & \underline{59.1} & \underline{48.6} & \underline{+1.9} \\
ReLIFT
& \underline{15.6}/\textbf{15.8} & 55.8 & 80.5 & 38.8 & 44.1 & \underline{41.8} & \underline{+4.0}
& 32.4 & 57.3 & 44.9 & -1.8 \\
Scaf-GRPO
& 14.6/12.7 & \underline{58.8} & 78.0 & \textbf{39.8} & 42.0 & 41.0 & +2.2
& 36.6 & 58.4 & 47.5 & +0.8 \\
\rowcolor{oursrow}
\textbf{Ours}
& \textbf{17.1}/\underline{15.4} & \textbf{60.5} & \underline{81.0} & \underline{39.5} & \textbf{47.3} & \textbf{43.5} & \textbf{+5.7}
& \textbf{39.6} & \textbf{60.5} & \textbf{50.1} & \textbf{+3.4} \\
\midrule

\textit{Qwen3-4B-Instruct}
& 52.1/43.1 & 92.2 & 93.6 & 46.1 & 67.7 & 65.8 & 0
& \textbf{57.6} & 70.9 & 64.3 & 0 \\
SFT
& 14.4/22.1 & 55.2 & 78.9 & 38.1 & 40.2 & 41.5 & -24.3
& 29.4 & 52.6 & 41.0 & -23.3 \\
GRPO
& 55.8/46.5 & \textbf{95.0} & \underline{95.6} & \textbf{49.9} & \underline{70.3} & 68.8 & +3.0
& \underline{57.0} & \underline{72.0} & \underline{64.5} & \underline{+0.2} \\
LUFFY
& 50.8/38.1 & 88.8 & 93.3 & 48.1 & 59.4 & 63.1 & -2.7
& 56.0 & 46.6 & 51.3 & -13.0 \\
ReLIFT
& \underline{59.0}/\textbf{48.3} & 93.3 & 95.3 & 48.1 & 69.1 & 68.9 & +3.1
& 56.8 & 71.8 & 64.3 & 0.0 \\
Scaf-GRPO
& 58.3/\textbf{48.3} & \underline{94.2} & 95.4 & \underline{48.8} & 69.8 & \underline{69.1} & \underline{+3.3}
& 53.6 & \textbf{72.1} & 62.9 & -1.4 \\
\rowcolor{oursrow}
\textbf{Ours}
& \textbf{62.5}/\underline{47.5} & 93.4 & \textbf{95.8} & 48.3 & \textbf{71.5} & \textbf{69.8} & \textbf{+4.0}
& \textbf{57.6} & \textbf{72.1} & \textbf{64.9} & \textbf{+0.6} \\
\bottomrule
\end{tabular}%
}
\end{table*}

\subsection{Main Results}
\vspace{-0.5em}
Table~\ref{tab:main_results} shows that SORT is the strongest method on average across all three backbones, covering weak, medium, and strong model regimes. Relative to vanilla GRPO, SORT improves the in-distribution average by 2.0 points on Llama-3.2-3B, 2.4 points on Qwen2.5-7B, and 2.0 points on Qwen3-4B. These gains also transfer out of distribution, where SORT improves the GPQA/MMLU-Pro average by 0.7, 2.7, and 1.4 points, respectively. This pattern matches the intended role of SORT: when GRPO fails to produce a correct rollout on difficult prompts, the auxiliary reference-path update recovers additional learning signal without changing the rollout distribution.

The largest improvement over the untuned base model appears on the weakest backbone. On Llama-3.2-3B-Instruct, SORT increases the in-distribution average from 17.8 to 23.9 and the out-of-distribution average from 22.5 to 33.8, which is consistent with the expectation that all-wrong groups occur most frequently in this regime. On the medium Qwen2.5-7B-Instruct backbone, SORT also gives the strongest in-distribution and out-of-distribution averages, showing that the benefit is not limited to low-capability models. At the same time, SORT still improves the strongest starting point, Qwen3-4B-Instruct, from 65.8 to 69.8 in distribution and from 64.3 to 64.9 out of distribution, indicating that the method remains complementary even when the base policy already solves many prompts. By contrast, LUFFY applies reference supervision much more broadly, effectively mixing ground-truth traces into the update for every prompt and boosting low-probability tokens even when they are not the right learning signal; this makes it less selective than SORT. ReLIFT and pure SFT expose a different failure mode on weak backbones: they tend to make answers longer and closer to reference traces, but this extra verbosity does not translate into better reasoning accuracy. This is most visible on Llama-3.2-3B-Instruct, where SFT and ReLIFT both underperform GRPO despite producing more imitation-like responses. This supports the core design principle of SORT: repairing hard questions should be reference-supported, but the supervision must remain token-selective and policy-aware rather than uniformly imitative.

\subsection{Training Dynamics}
\label{sec:training_dynamics}
\begin{figure*}[t]
    \centering
    \includegraphics[width=0.9\textwidth]{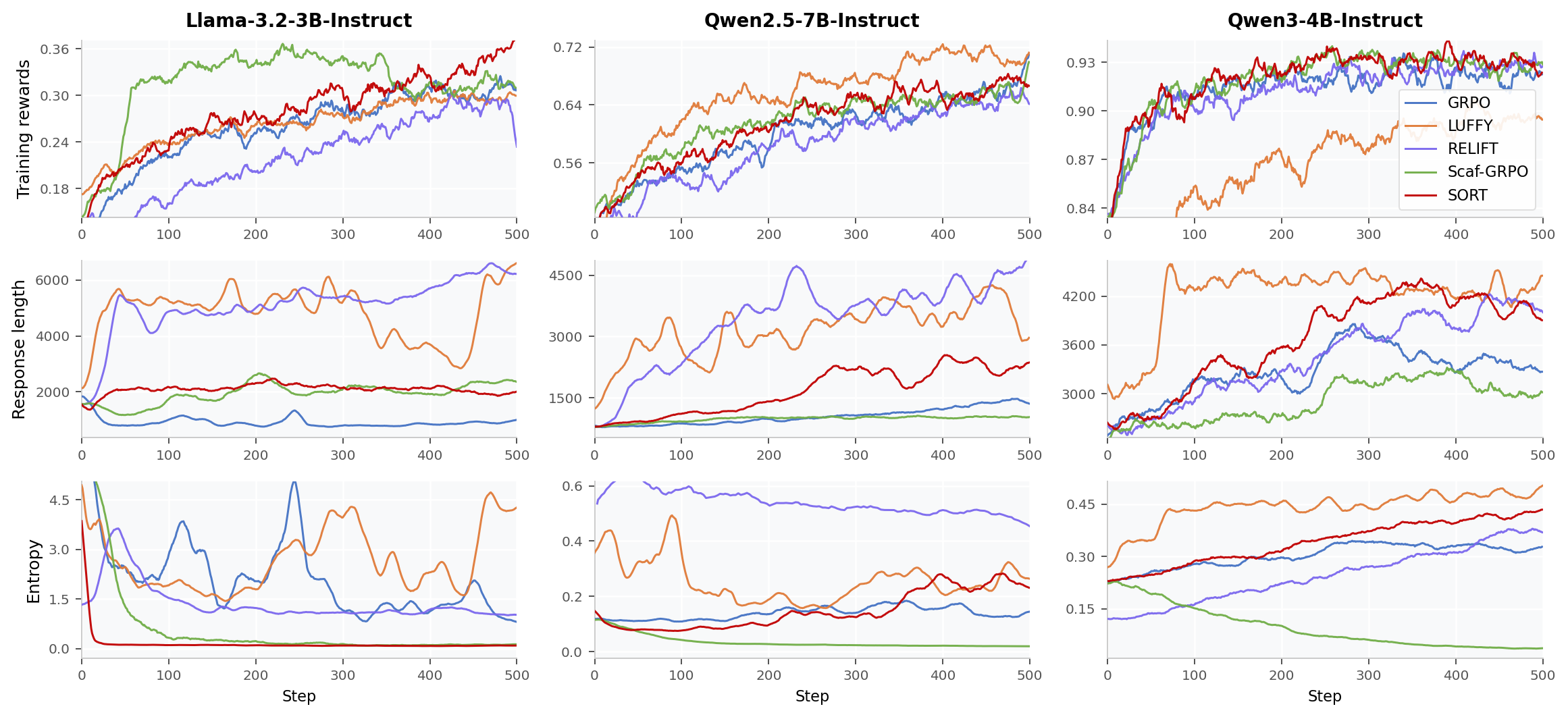}
\caption{Training dynamics across 500 optimization steps. Rows show reward, response length, and entropy; columns show the three backbones. SORT improves reward without the length expansion seen in LUFFY and ReLIFT.}
\label{fig:training_curves_3models}
\end{figure*}
\vspace{-0.8em}
Figure~\ref{fig:training_curves_3models} complements the final benchmark results by showing that higher reward is not merely a byproduct of longer generations. On Llama-3.2-3B-Instruct, SORT steadily improves to the highest reward while response length stays compact, whereas LUFFY and ReLIFT quickly expand length and Scaf-GRPO plateaus after sharp entropy collapse---the regime where all-wrong groups are most frequent, suggesting that selective reference repair guides the weak policy without broad trace imitation. On Qwen2.5-7B-Instruct, SORT catches up to LUFFY's early high reward at substantially lower length, indicating the plan-weighted update repairs hard prompts without verbosity cost. On Qwen3-4B-Instruct, where rewards near saturation, SORT tracks the best reward band, avoids LUFFY's long-response underperformance, and maintains moderate entropy unlike Scaf-GRPO. Across backbones, the curves support the central claim that SORT adds targeted pressure on hard cases instead of merely increasing length or uniformly imitating full demonstrations.

\subsection{Ablation Studies}
\label{sec:ablations}
\vspace{-0.5em}
We ablate SORT on Qwen2.5-7B-Instruct along two axes: the weighting exponent and the auxiliary-update variant.

\paragraph{Effect of $\beta$.}
Table~\ref{tab:beta_ablation} compares DFT-style student weighting ($\beta=0$), teacher-only weighting ($\beta=1$), and the geometric mean used by SORT ($\beta=1/2$). The midpoint gives the best in-distribution and out-of-distribution averages, improving from 41.6/48.3 for $\beta=0$ and 42.4/47.5 for $\beta=1$ to 43.5/50.1. This suggests that plan selectivity and student support are both needed.

\begin{table}[t]
\centering
\caption{Effect of the interpolation exponent $\beta$ on Qwen2.5-7B-Instruct.}
\label{tab:beta_ablation}
\setlength{\tabcolsep}{3.5pt}
\renewcommand{\arraystretch}{1.05}
\scriptsize
\resizebox{\columnwidth}{!}{%
\begin{tabular}{lccccccccccc}
\toprule
& \multicolumn{7}{c}{\textbf{In-distribution}}
& \multicolumn{4}{c}{\textbf{Out-of-distribution}} \\
\cmidrule(lr){2-8} \cmidrule(lr){9-12}
\textbf{Method}
& \textbf{AIME24/25} & \textbf{AMC23} & \textbf{MATH-500} & \textbf{Minerva} & \textbf{Olympiad} & \textbf{Avg.} & $\Delta$
& \textbf{GPQA} & \textbf{MMLU-Pro} & \textbf{Avg.} & $\Delta$ \\
\midrule

\textit{Qwen2.5-7B-Instruct}
& 13.8/6.7 & 53.4 & 75.7 & 38.1 & 39.2 & 37.8 & 0
& 37.1 & 56.4 & 46.7 & 0 \\

$\beta = 0$ 
& \textbf{17.7}/12.9 & 55.0 & 79.6 & 39.5 & 44.8 & 41.6 & +3.8
& 37.4 & 59.2 & 48.3 & +1.6 \\

$\beta = 1$ 
& 17.3/13.8 & 56.7 & 80.3 & 40.1 & 46.1 & 42.4 & +4.6
& 36.1 & 58.9 & 47.5 & +0.8 \\

$\beta = 1/2$ (SORT)
& 17.1/\textbf{15.4} & \textbf{60.5} & \textbf{81.0} & \textbf{39.5} & \textbf{47.3} & \textbf{43.5} & \textbf{+5.7}
& \textbf{39.6} & \textbf{60.5} & \textbf{50.1} & \textbf{+3.4} \\

\bottomrule
\end{tabular}%
}
\end{table}
\vspace{-0.8em}
\paragraph{Training variant.}
Table~\ref{tab:training_variant} separates buffered reference learning from plan-conditioned reweighting. Uniform SFT and plan-text supervision improve in-distribution accuracy to 42.2 and 42.4, but their out-of-distribution averages are 45.5 and 46.7, below the base model or tied with it. SORT reaches 43.5 in distribution and 50.1 out of distribution, indicating that the gain comes from token-selective geometric weighting rather than buffering alone.

\begin{table}[t]
\centering
\caption{Ablation on auxiliary update variants for Qwen2.5-7B-Instruct.}
\label{tab:training_variant}
\setlength{\tabcolsep}{3.5pt}
\renewcommand{\arraystretch}{1.05}
\scriptsize
\resizebox{\columnwidth}{!}{%
\begin{tabular}{lccccccccccc}
\toprule
& \multicolumn{7}{c}{\textbf{In-distribution}}
& \multicolumn{4}{c}{\textbf{Out-of-distribution}} \\
\cmidrule(lr){2-8} \cmidrule(lr){9-12}
\textbf{Method}
& \textbf{AIME24/25} & \textbf{AMC23} & \textbf{MATH-500} & \textbf{Minerva} & \textbf{Olympiad} & \textbf{Avg.} & $\Delta$
& \textbf{GPQA} & \textbf{MMLU-Pro} & \textbf{Avg.} & $\Delta$ \\
\midrule
\textit{Qwen2.5-7B-Instruct}
& 13.8/6.7 & 53.4 & 75.7 & 38.1 & 39.2 & 37.8 & 0
& 37.1 & 56.4 & 46.7 & 0 \\
SFT only        & 16.7/13.3 & 56.9  & \textbf{81.5} & 39.2 & 45.6 & 42.2 & +4.4 & 34.6  & 56.4 & 45.5 & -1.2 \\
SFT + plan       & 16.4/15.2 & 56.7  & 81.0 & 37.8 & \textbf{47.5} & 42.4 & +4.6 & 34.3  & 59.0 & 46.7 & 0.0 \\
SORT (Ours)     & \textbf{17.1}/\textbf{15.4} & \textbf{60.5} & 81.0 & \textbf{39.5} & 47.3 & \textbf{43.5} & \textbf{+5.7} & \textbf{39.6} & \textbf{60.5} & \textbf{50.1} & \textbf{+3.4} \\
\bottomrule
\end{tabular}%
}
\end{table}
\subsection{Token Salience Visualization}
\label{sec:token_salience_viz}
\begin{figure}[!t]
    \centering
    \includegraphics[width=0.9\linewidth]{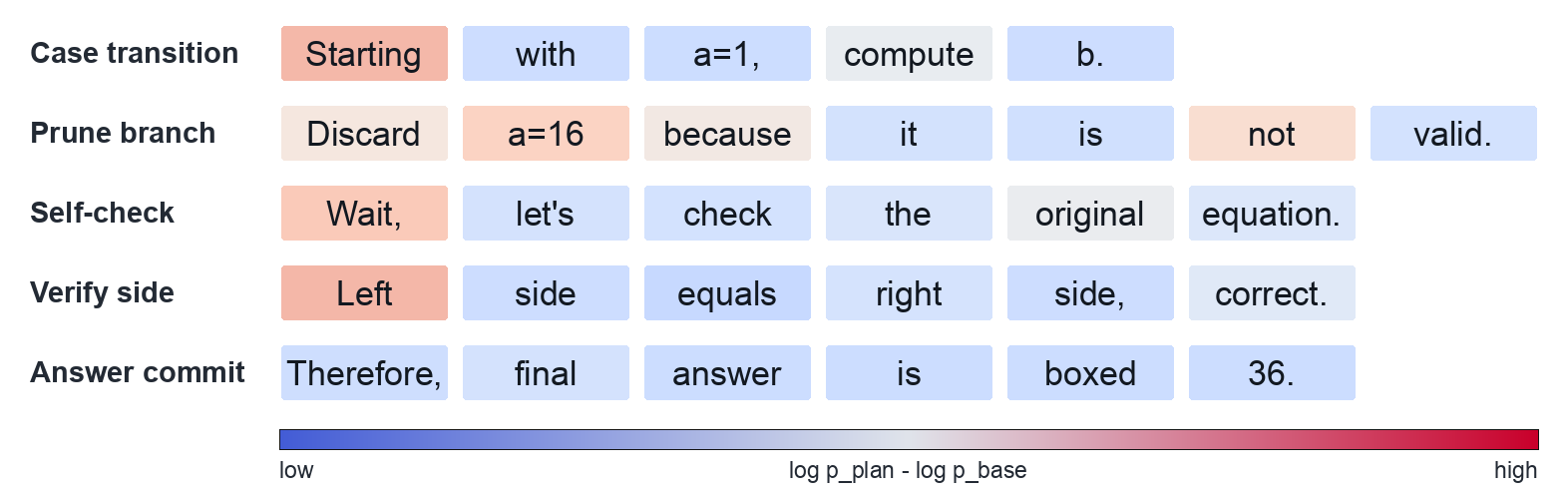}
\caption{Token-level salience on a zero-reward reference trace. Colors indicate the plan-conditioned log-ratio $\hat{\sigma}_{i,t}$, with high-salience regions concentrating on reasoning-control operations such as branching, verification, and self-checking.}
\label{fig:token_salience_excerpt}
\end{figure}
\vspace{-0.8em}
Figure~\ref{fig:token_salience_excerpt} illustrates the mechanism behind the token-selective update. Plan conditioning raises probability on tokens that steer the autoregressive reasoning trajectory, including case transitions, branch pruning, self-checks, and verification. Many local symbolic or answer-formatting tokens remain close to neutral because they are already predictable from the reference prefix. This supports the design of SORT as a policy-aware repair objective rather than uniform imitation of the full solution trace.



\section{Conclusion}

We introduced \textbf{Selective Off-Policy Reference Tuning with Plan Guidance (SORT)}, an auxiliary update for the zero-gradient failure mode of GRPO on all-wrong prompts. SORT buffers failed prompts, extracts a reference-derived solution plan, and uses same-model plan conditioning to identify reference tokens whose probabilities rise when the reasoning structure is supplied, yielding a DFT-style token-selective loss rather than uniform imitation. Our analysis connects this weight to an oracle structural-token objective and bounds the gap by the quality of the extracted plan. Across three instruction-tuned backbones and eight benchmarks, SORT consistently improves over GRPO and guidance-based baselines, with the largest gains on weaker models where all-wrong groups are most common.

\begin{ack}
Use unnumbered first level headings for the acknowledgments. All acknowledgments go at the end of the paper before the list of references. 

\end{ack}

\bibliographystyle{unsrtnat}
\bibliography{main}


\appendix
\section{Algorithm}\label{sec:alg}
\begin{algorithm}[h]
    \caption{GRPO with SORT Auxiliary Reference Updates}
    \label{alg:sort}
    \begin{algorithmic}[1]
    \REQUIRE Policy parameters $\theta$; rollout batch size $B_{\mathrm{rl}}$; rollouts per prompt $N$; auxiliary batch size $B_{\mathrm{aux}}$; empty buffer $\mathcal{B}_{0}$
    \FOR{$k = 1,\ldots,K$}
        \STATE Sample rollout batch $\mathcal{U}_k = \{q_i\}_{i=1}^{B_{\mathrm{rl}}}$
        \FOR{$i = 1,\ldots,B_{\mathrm{rl}}$}
            \STATE Sample $\{y_i^{(n)}\}_{n=1}^N \sim \pi_\theta(\cdot \mid q_i)$
            \STATE Compute verifiable rewards $\{r_i^{(n)}\}_{n=1}^N$
        \ENDFOR
        \STATE Compute GRPO advantages and update $\theta$ using $\mathcal{L}_{\mathrm{GRPO}}(\mathcal{U}_k)$
        \STATE $\mathcal{I}_{0}^{(k)} \gets \{\, i : r_i^{(n)} = 0,\; \forall n \in [N] \,\}$
        \STATE Enqueue $\{(q_i,y_i^\star): i\in \mathcal{I}_{0}^{(k)}\}$ into $\mathcal{B}_{0}$
        \IF{$|\mathcal{B}_{0}| \ge B_{\mathrm{aux}}$}
            \STATE Sample and remove $\mathcal{M}_k \sim \operatorname{SampleBatch}(\mathcal{B}_{0}, B_{\mathrm{aux}})$
            \FORALL{$(q_i,y_i^\star)\in \mathcal{M}_k$}
                \STATE $\hat{s}_i \gets \operatorname{Gen}_\theta(\textsc{PlanExtract}(y_i^\star))$
                \STATE Compute base probabilities $p^{\mathrm{base}}_{i,t}$ along $y_i^\star$
                \STATE Compute plan-conditioned probabilities $p^{\mathrm{plan}}_{i,t}$ along $y_i^\star$
                \STATE $\omega_{i,t} \gets
                \operatorname{sg}\!\left(
                \sqrt{
                p^{\mathrm{base}}_{i,t}
                p^{\mathrm{plan}}_{i,t}
                }
                \right)$ for all $t$
            \ENDFOR
            \STATE Update $\theta$ using $\mathcal{L}_{\mathrm{SORT}}(\mathcal{M}_k)$
        \ENDIF
    \ENDFOR
    \end{algorithmic}
    \end{algorithm}

\section{Limitations \& Broader Impacts}

\paragraph{Limitations.}
First, SORT assumes a verified reference solution is available for each zero-reward prompt. In domains where ground-truth solutions are expensive to annotate, the auxiliary update cannot be applied and the method reduces to vanilla GRPO on those prompts; our evaluation is also limited to mathematical reasoning and knowledge benchmarks, and transfer to other modalities such as code generation remains to be tested.
Second, like GRPO, SORT assumes verifiable binary outcome rewards. Extending the auxiliary repair mechanism to continuous, learned, or process-based reward signals is not addressed in this work.

\paragraph{Broader impacts.}
SORT improves the sample efficiency of RL-based reasoning training by converting previously wasted zero-reward prompts into structured learning signals. This has a positive practical impact: better reasoning ability from the same amount of training data, which reduces the compute and environmental cost of running large-scale RLVR pipelines. The method is general and could benefit any application where verifiable rewards and reference solutions are available, including formal theorem proving, code verification, and scientific reasoning.

On the potential negative side, SORT makes it easier to fine-tune language models toward stronger reasoning with less data, lowering the barrier for both beneficial and harmful uses. Improved mathematical reasoning could be leveraged for tasks such as automated vulnerability discovery or optimization of harmful processes. However, this is a general risk shared by virtually all LLM reasoning improvements, not one specific to SORT. The method does not introduce new failure modes beyond those already present in GRPO and reference-based fine-tuning. We see no reason why SORT would disproportionately increase misuse risk relative to other RLVR training techniques, and we encourage practitioners to apply standard responsible-release practices when deploying models trained with this method.

\section{Discussion}
\label{app:discussion}

\paragraph{Relation to self-distillation.}
Recent work suggests that self-distillation is a promising way to improve reasoning models without relying on a stronger external teacher
\citep{shenfeld2026selfdistillationenablescontinuallearning,hubotter2026reinforcement,zhao2026self}.
However, \citet{kim2026doesselfdistillationsometimesdegrade} show that self-distillation can mainly reinforce tokens that are already easy to predict from the previous context, while giving weaker signal to uncertain tokens. 
This is important for reasoning traces: many tokens, such as arithmetic continuations, algebraic simplifications, formatting tokens, or short connective phrases, are often predictable from the local prefix. 
Imitating these tokens is easy but provides little new reasoning signal.

Our method is designed to avoid over-emphasizing such routine tokens. 
If a token can already be predicted from the prefix, adding the plan changes its likelihood only slightly, so its salience remains small and it receives roughly ordinary weight. 
In contrast, tokens corresponding to uncertain reasoning choices, such as selecting a case split, introducing a key variable, applying a theorem, or moving to a new subgoal, can become much more predictable once the plan is provided. 
These tokens receive larger weights. 
Thus, rather than distilling the entire chain-of-thought uniformly, our objective keeps routine tokens near their normal weighting while upweighting the uncertain tokens that are made predictable by the reasoning plan.

\paragraph{Zero-reward prompts.}
This behavior is especially useful for zero-reward prompts, where all on-policy rollouts fail and GRPO provides no useful learning signal. 
A direct reference-imitation loss may still mostly train the model on easy parts of the solution, because those tokens dominate the trace and are locally predictable. 
Our method instead uses the reference solution to identify which tokens in the failed prompt correspond to uncertain reasoning steps. 
As shown in Section~\ref{app:full_token_salience}, tokens that are predictable from the prefix have near-zero salience, while uncertain tokens within the same solution step receive higher weights when plan guidance makes them predictable. 
This allows the model to learn from the reference without simply copying the full solution trace and lead to entropy collapse while training.

\paragraph{Implication.}
From this perspective, our method can be viewed as a selective form of self-distillation for reasoning. 
Rather than treating the model's or reference's entire chain-of-thought as equally useful supervision, we use plan-conditioned likelihood changes to separate routine continuation tokens from structurally informative tokens. 
This provides a simple mechanism for preserving the benefits of self-distillation while mitigating its tendency to over-emphasize easy tokens and under-emphasize uncertain reasoning steps.

\label{app:hard_examples}

To make the zero-reward regime in Section~\ref{sec:method} concrete, we show two examples on which Qwen2.5-7B-Instruct fails to produce any correct rollout in the sampled group. They illustrate prompts where the current policy misses the global reasoning structure needed to reach a verifiable answer.

\paragraph{Example 1: adaptive geometric search on a checkerboard.}
\emph{Qwen2.5-7B-Instruct outcome: zero-reward rollout group; ground-truth answer: 2.}

\begin{quote}
Bakayev E.V. Petya drew a square on the plane, divided it into 64 identical smaller squares, and colored them in a checkerboard pattern with black and white colors. After that, he thought of a point strictly inside one of these smaller squares. Vasya can draw any closed broken line without self-intersections on the plane and get an answer to the question of whether the guessed point is strictly inside the broken line or not. What is the minimum number of such questions Vasya needs to determine the color of the guessed point---white or black?
\end{quote}

This prompt is hard for sparse-reward RL because a correct answer depends on designing an adaptive querying strategy and proving optimality. A partially correct rollout can identify the board structure or propose a query, but the verifier only rewards the final minimum and its justification. Below is a representative Qwen2.5-7B-Instruct rollout that answers $1$ (correct is $2$): the model reduces the problem to parity of grid coordinates but overlooks that one question alone cannot distinguish both colors from the exterior region.

\smallskip
\noindent\emph{Model output (incorrect).}
\begin{quote}\small
First, note that the square is divided into $8 \times 8 = 64$ smaller squares in a checkerboard pattern. The color of a square depends on the parity of the sum of its row and column indices. Specifically, if $(i+j) \bmod 2 = 0$ the square is black, otherwise it is white. The key observation is that if we can determine the parity of $(i+j)$ for the square containing the point, we know the color.

Vasya can draw a closed broken line that separates squares with even $i+j$ from those with odd $i+j$. For instance, a line running from $(1,1)$ to $(8,8)$ partitions the board by parity. By asking whether the point is inside this region, Vasya learns the parity and therefore the color.

Thus, the minimum number of questions is $\boxed{1}$.
\end{quote}

The rollout misses that one query cannot reliably separate the two parity classes while also handling the exterior region, so the answer requires at least two adaptive questions. In the observed Qwen2.5-7B-Instruct rollouts, this global query strategy is not recovered, so the group receives no positive reward.

\paragraph{Example 2: worst-case color separation by lines.}
\emph{Qwen2.5-7B-Instruct outcome: zero-reward rollout group; ground-truth answer: 2013.}

\begin{quote}
In the plane, 2013 red points and 2014 blue points are marked so that no three of the marked points are collinear. One needs to draw \(k\) lines not passing through the marked points and dividing the plane into several regions. The goal is to do it in such a way that no region contains points of both colors. Find the minimal value of \(k\) such that the goal is attainable for every possible configuration of 4027 points. (Australia)
\end{quote}

This prompt combines extremal geometry with a worst-case quantifier over all configurations. A successful solution must provide both a construction showing sufficiency and an adversarial configuration showing necessity. Below is a representative Qwen2.5-7B-Instruct rollout that answers $2015$ (correct is $2013$): the model reasons about separating convex hulls but loses the tight count on the required number of lines.

\smallskip
\noindent\emph{Model output (incorrect).}
\begin{quote}\small
First, consider the convex hull of all the points. Since no three points are collinear, the convex hull is a polygon with at most 4027 vertices (the maximum when all points lie on the hull).

The key insight is that we can separate the convex hull into two regions by drawing lines such that one region contains only red points and the other only blue points. This is done by drawing lines that pass between the convex hull and the rest of the points.

To ensure no region contains points of both colors, we need to separate the convex hull from the remaining points. Each side of the convex hull can be ``cut'' by one line, requiring at most $\text{vertices} - 2 = 4027 - 2 = 4025$ lines, plus one additional line to guarantee complete separation, yielding $4026$ lines. However, a tighter analysis using the convex hull of the minority color (2013 red points) gives $2014$ lines to cut around them, plus one final separator, for a total of $2015$.

Thus, the minimal value of $k$ is $\boxed{2015}$.
\end{quote}

The rollout uses convex-hull reasoning in the right spirit but mishandles the counting: the correct upper-bound construction uses $2013$ lines, each separating one red point from all blue points, and the matching lower bound exhibits a configuration where fewer than $2013$ lines cannot suffice. In the observed Qwen2.5-7B-Instruct rollouts, local geometric intuitions lead to plausible but unverifiable partial arguments, and the full upper- and lower-bound structure needed for the answer is absent.

\section{Example Ground-Truth Solution Plan}
\label{app:ground_truth_plan_example}

For the logarithmic equation
\[
\log _{\sqrt{a}} \frac{\sqrt{2 a - x}}{a}-\log _{1 / a} x=0,
\]
the plan extraction prompt produces a thinking process that captures the key reasoning decisions, self-checks, and intermediate realizations:

\begin{quote}
First, I need to handle the domain of the logarithms. For the first log to be defined, the base $\sqrt{a}>0$, $\sqrt{a}\ne 1$, so $a>0$, $a\ne 1$. The argument $\frac{\sqrt{2a-x}}{a}>0$ — since $a>0$, this means $\sqrt{2a-x}>0$, so $2a-x>0$, giving $x<2a$. For $\log_{1/a}x$, the base $1/a>0$ and $1/a\ne 1$ are already covered by $a>0$, $a\ne 1$, and the argument gives $x>0$. So domain: $a>0$, $a\ne 1$, $0<x<2a$.

Now to solve the equation. The bases $\sqrt{a}$ and $1/a$ are inconvenient — let me rewrite them as powers of $a$. $\sqrt{a}=a^{1/2}$, $1/a=a^{-1}$.

Using the change of base formula: $\log_{a^{1/2}} u = \frac{\log_a u}{\log_a a^{1/2}} = \frac{\log_a u}{1/2} = 2\log_a u$. So the first term is $2\log_a(\frac{\sqrt{2a-x}}{a})$.

Wait — actually, let me be more careful. $\log_{\sqrt{a}} \frac{\sqrt{2a-x}}{a}$. Using base change: $\log_{a^{1/2}} u = 2\log_a u$. With $u = \frac{\sqrt{2a-x}}{a} = \frac{\sqrt{2a-x}}{a}$, this is $\log_a \frac{\sqrt{2a-x}}{a} = \log_a\sqrt{2a-x} - \log_a a = \frac{1}{2}\log_a(2a-x) - 1$.

Hmm — so doubled: $2(\frac{1}{2}\log_a(2a-x) - 1) = \log_a(2a-x) - 2$.

Now the second term: $\log_{1/a} x = \log_{a^{-1}} x = \frac{\log_a x}{\log_a a^{-1}} = \frac{\log_a x}{-1} = -\log_a x$.

The equation becomes: $(\log_a(2a-x)-2) - (-\log_a x) = 0$, so $\log_a(2a-x) - 2 + \log_a x = 0$.

Combine: $\log_a[x(2a-x)] = 2$.

Exponentiate: $x(2a-x) = a^2$, so $2ax - x^2 = a^2$, giving $x^2 - 2ax + a^2 = 0$.

This factors as $(x-a)^2 = 0$, so $x = a$.

Check against the domain: $a>0$, $a\ne 1$, and $0 < a < 2a$ which holds for all $a>0$. So $x=a$ is valid. Final answer: $x=a$.
\end{quote}

The extracted plan is more conversational than a polished solution but exposes the global decisions: domain analysis, base conversion with a self-check, logarithm combination, quadratic reduction, and final domain verification. These are precisely the tokens whose probabilities should rise under plan conditioning.

\section{Qualitative Plan Extraction Example}
\label{app:plan_extraction_example}

We show a representative example of the plan-extraction step used before plan-conditioned scoring. The prompt asks for a two-digit natural number \(\overline{ab}\) satisfying
\[
\sqrt{a+\sqrt{\overline{ab}}}=a.
\]

The full reference solution contains both high-level structural reasoning and lower-level surface realization, including intermediate checks, self-corrections, and verification steps:

\begin{quote}\small
Okay, so I need to find a two-digit natural number \(\overline{ab}\) such that
\[
\sqrt{a+\sqrt{\overline{ab}}}=a.
\]
Since \(\overline{ab}\) is a two-digit number, \(a\) is the tens digit and \(b\) the units digit, so
\[
\overline{ab}=10a+b,
\]
with \(a\in\{1,\ldots,9\}\) and \(b\in\{0,\ldots,9\}\).

Substitute into the equation:
\[
\sqrt{a+\sqrt{10a+b}}=a.
\]

Square both sides:
\[
a+\sqrt{10a+b}=a^2.
\]

Subtract \(a\):
\[
\sqrt{10a+b}=a^2-a.
\]

Square again:
\[
10a+b=(a^2-a)^2.
\]

Expand:
\[
(a^2-a)^2=a^4-2a^3+a^2,
\]
so
\[
10a+b=a^4-2a^3+a^2.
\]

Rearranging,
\[
b=a^4-2a^3+a^2-10a.
\]

Since \(b\) must be a digit from \(0\) to \(9\), test values of \(a\).

For \(a=1\),
\[
b=1-2+1-10=-10,
\]
which is invalid.

For \(a=2\),
\[
b=16-16+4-20=-16,
\]
also invalid.

For \(a=3\),
\[
b=81-54+9-30.
\]
Wait, \(81-54=27\), then \(27+9=36\), and \(36-30=6\). So \(b=6\), which is a valid digit.

Thus the candidate number is \(36\). Check:
\[
\sqrt{3+\sqrt{36}}
=
\sqrt{3+6}
=
\sqrt{9}
=
3.
\]

Now check whether larger values of \(a\) can work.

For \(a=4\),
\[
b=256-128+16-40=104,
\]
which is too large to be a digit.

For \(a=5\),
\[
b=625-250+25-50=350,
\]
also impossible. Larger \(a\) values only make \(b\) larger.

Therefore the unique solution is
\[
\boxed{36}.
\]
\end{quote}

The extracted thinking process preserves the key mathematical structure and genuine reasoning transitions while removing most repetitive exposition and stylistic filler:

\begin{quote}\small
Find a two-digit number \(\overline{ab}\) such that
\[
\sqrt{a+\sqrt{\overline{ab}}}=a.
\]

Since \(\overline{ab}\) is two-digit,
\[
\overline{ab}=10a+b,
\]
so
\[
\sqrt{a+\sqrt{10a+b}}=a.
\]

Square both sides:
\[
a+\sqrt{10a+b}=a^2,
\qquad
\sqrt{10a+b}=a^2-a.
\]

Square again:
\[
10a+b=(a^2-a)^2.
\]

Expand:
\[
(a^2-a)^2=a^4-2a^3+a^2,
\]
hence
\[
b=a^4-2a^3+a^2-10a.
\]

Since \(b\) must be a digit, test values of \(a\).

For \(a=1\),
\[
b=-10,
\]
invalid.

For \(a=2\),
\[
b=-16,
\]
also invalid.

For \(a=3\),
\[
b=81-54+9-30.
\]
Wait, \(81-54=27\), then \(27+9=36\), and \(36-30=6\). So \(b=6\).

Check:
\[
\sqrt{3+\sqrt{36}}
=
\sqrt{3+6}
=
\sqrt{9}
=
3.
\]

For \(a\ge4\), \(b\) becomes too large to be a digit. Therefore,
\[
\overline{ab}=36.
\]
\end{quote}

This example illustrates that plan extraction retains the essential solution trajectory, intermediate calculations, and moments of genuine correction, while discarding verbose explanatory narration that does not materially affect the reasoning structure.

\section{Prompt Templates for Plan Extraction and Teacher Conditioning}
\label{app:prompt_templates}

We follow the prompting setup of \citet{luffy} and use the same system prompts for SORT and all baseline methods.
Specifically, for Qwen backbones we adopt the longer LUFFY reasoning prompt, while for the LLaMA-3.2-3B backbone we use a simplified step-by-step prompt, as the model does not reliably follow the longer instruction.

\paragraph{System prompt (Qwen backbones).}
\begin{quote}
\ttfamily
Your task is to follow a systematic, thorough reasoning process before providing the final solution. This involves analyzing, summarizing, exploring, reassessing, and refining your thought process through multiple iterations. Structure your response into two sections: Thought and Solution. In the Thought section, present your reasoning using the format: <think>  {thoughts}  </think>. Each thought should include detailed analysis, brainstorming, verification, and refinement of ideas. After </think>, in the Solution section, provide the final, logical, and accurate answer, clearly derived from the exploration in the Thought section. If applicable, include the answer in boxed{} for closed-form results like multiple choices or mathematical solutions.
\end{quote}

\paragraph{System prompt (LLaMA-3.2-3B-Instruct).}
\begin{quote}
\ttfamily
Please reason step by step, and put your final answer within \textbackslash boxed\{\}.
\end{quote}

\paragraph{Plan extraction prompt.}
\begin{quote}
\ttfamily
You are given a detailed reasoning process.
Extract only the key thinking steps verbatim, including intermediate calculations, dead ends, self-corrections, and the intuition behind each key realization.
Include moments of uncertainty only when they naturally arise.
Do not add decorative explanations.\\
\\
Reasoning: \{solution\}\\
\\
Thinking process:
\end{quote}

\paragraph{Teacher prompt.}
\begin{quote}
\ttfamily
\{problem\}\\
Here is the thinking process to solve this problem:\\
\{blueprint\}
\end{quote}

\section{All-Wrong Sample Fraction During SORT Training}
\label{app:all_wrong_fraction}

We additionally track the number of \textit{all-wrong samples} during SORT training, i.e., rollout groups where every sampled trajectory receives zero reward under GRPO. Concretely, for a prompt $q_i$, this corresponds to

\[
r_i^{(1)} = \cdots = r_i^{(N)} = 0,
\]

which is exactly the zero-reward regime targeted by SORT in Section~3.1. Figure~\ref{fig:wrong_case_fraction} shows the evolution of the number of all-wrong prompts across training for Qwen3-4B-Instruct, Qwen2.5-7B-Instruct, and Llama-3.2-3B-Instruct.

\begin{figure}[t]
    \centering
    \includegraphics[width=\linewidth]{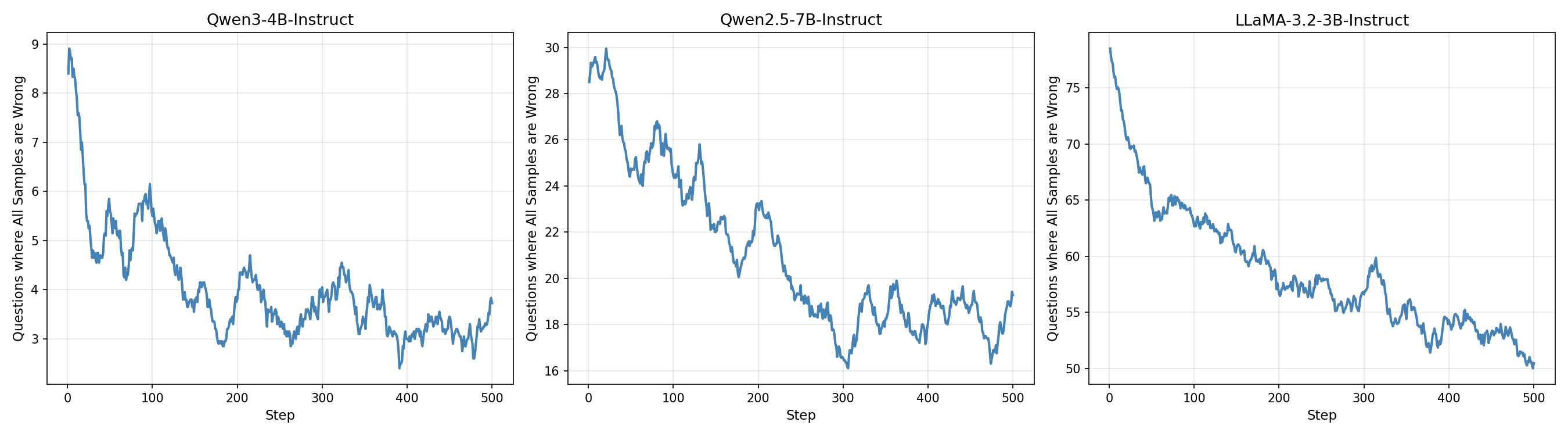}
    \caption{
    Number of all-wrong rollout groups during SORT training, where every sampled trajectory receives zero reward. Lower values indicate fewer zero-reward prompts.
    }
    \label{fig:wrong_case_fraction}
\end{figure}

Several trends support the motivation of SORT. Weaker models exhibit substantially more all-wrong rollout groups, with Llama-3.2-3B-Instruct starting near 80 zero-reward prompts per batch, Qwen2.5-7B-Instruct around 30, and Qwen3-4B-Instruct below 10. Across all backbones, the number of all-wrong groups decreases steadily during training, indicating that RL optimization gradually expands the set of prompts for which at least one successful trajectory can be sampled. The largest reductions occur for weaker models, matching the observation in Table~1 that SORT provides the strongest gains in low-capability regimes. At the same time, residual zero-reward prompts persist even for stronger models, suggesting that SORT-style hard-prompt repair remains useful throughout training rather than only in the early stages.

This extracted plan is not used as a target sequence. It is only prepended to the teacher context when scoring the original reference tokens. As a result, tokens corresponding to structural operations--substituting \(\overline{ab}=10a+b\), squaring twice, deriving the digit constraint on \(b\), and verifying \(36\)--become more predictable under plan conditioning. Routine phrasing and repeated arithmetic receive little additional support, which is the behavior used by SORT to distinguish plan-informative tokens from surface tokens. In terms of Assumption~\ref{assump:stability}, this is a favorable case: the extracted plan preserves the reasoning state needed to predict the reference path, so its induced hidden states should be close to those under the ideal plan. If the extractor dropped the digit constraint or the verification step, the corresponding \(\gamma\) would be larger and the regret bound in Theorem~\ref{thm:regret_bound} would become less informative.

\section{Runtime and Compute Overhead}
\label{app:runtime_overhead}

SORT adds only a modest overhead over vanilla GRPO. 
For each buffered zero-reward prompt, SORT performs one plan-generation step and one additional forward pass to compute the plan-conditioned token likelihoods along the reference trajectory. 
As a result, the runtime remains comparable to existing guidance-based baselines, as shown in Table~\ref{tab:runtime_overhead}.

\begin{table}[h]
\centering
\caption{Training time for Qwen2.5-7B-Instruct. Runtime is reported relative to vanilla GRPO.}
\label{tab:runtime_overhead}
\setlength{\tabcolsep}{4pt}
\renewcommand{\arraystretch}{1.08}
\small
\resizebox{0.8\linewidth}{!}{%
\begin{tabular}{lccccc}
\toprule
\textbf{Method}
& \textbf{GRPO}
& \textbf{LUFFY}
& \textbf{ReLIFT}
& \textbf{Scaf-GRPO}
& \textbf{SORT} \\
\midrule
\textbf{Training time}
& 1.0x (19.0h)
& 1.2x
& 1.95x
& 1.5x
& 1.55x \\
\bottomrule
\end{tabular}%
}
\end{table}

\section{SFT Loss Dynamics}
\label{app:sft_loss_dynamics}

Figure~\ref{fig:sft_loss_comparison} reports the SFT-path loss over training for the auxiliary reference-learning variants in Section~\ref{sec:ablations}. Uniform SFT variants keep a substantially higher loss because they place learning pressure on the whole reference trace, including routine algebra, formatting, and already-predictable tokens. The $\beta=0$ setting, corresponding to DFT-style student weighting, yields the smallest loss by downweighting reference tokens according to base-model support. SORT ($\beta=1/2$) stays close to this low-loss DFT regime but is intentionally above it: plan conditioning selectively boosts structurally important tokens, adding targeted learning pressure without returning to full-trace imitation. This diagnostic complements Tables~\ref{tab:beta_ablation} and~\ref{tab:training_variant}: it shows how the reference objective behaves during optimization, whereas the tables report downstream evaluation accuracy.

\begin{figure}[H]
    \centering
    \includegraphics[width=\linewidth]{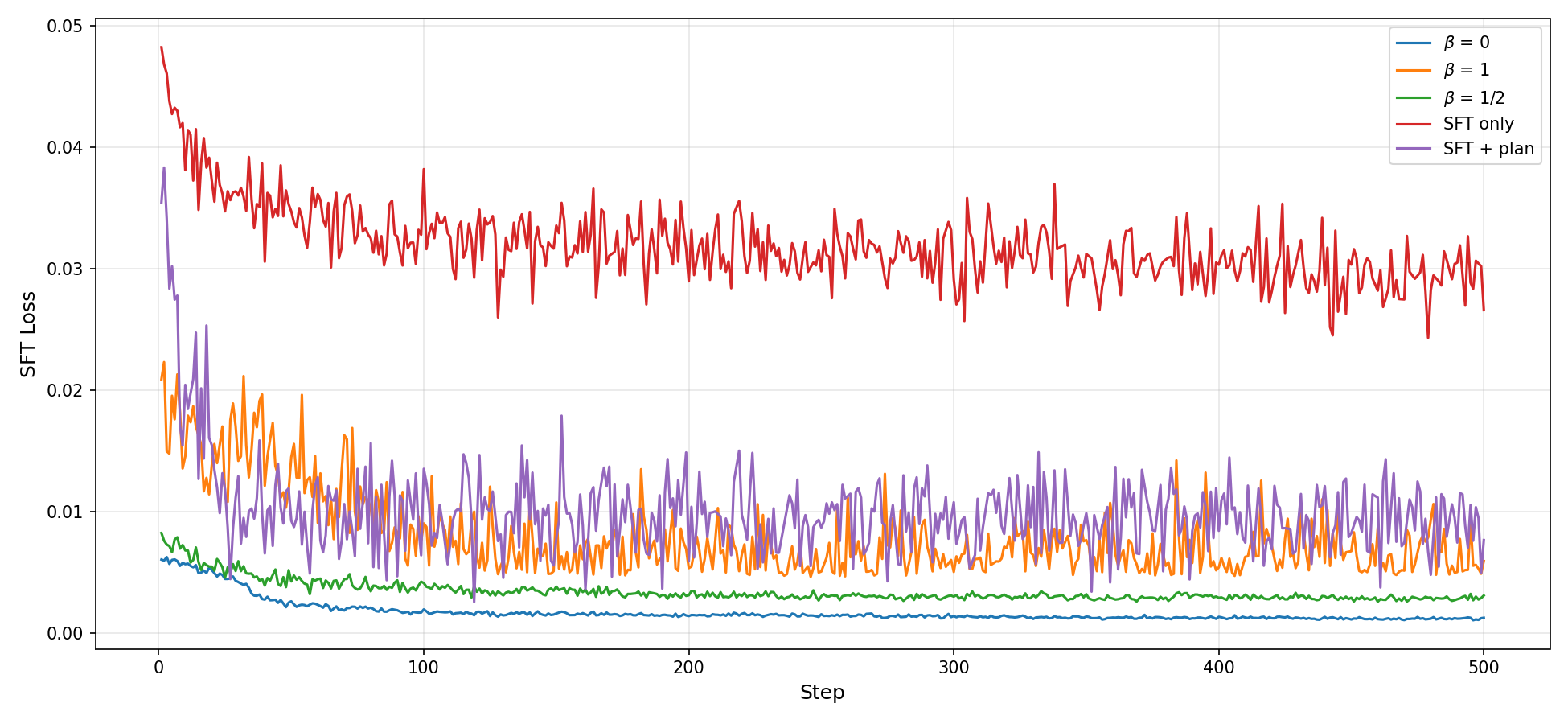}
    \caption{SFT-path loss trajectories across 500 training steps for the auxiliary reference-learning variants on Qwen2.5-7B-Instruct. Uniform SFT baselines maintain high loss because they learn from all reference tokens. DFT-style weighting ($\beta=0$) gives the lowest loss, while SORT ($\beta=1/2$) remains near DFT but higher due to selective boosting of plan-informative tokens.}
    \label{fig:sft_loss_comparison}
\end{figure}

\section{Full Token Salience Visualization}
\label{app:full_token_salience}

Figure~\ref{fig:full_token_salience_1}--\ref{fig:full_token_salience_4} show the full token-level visualization corresponding to the excerpt in Figure~\ref{fig:token_salience_excerpt}. Each cell is one token from the reference trace, colored by the plan-conditioned log-ratio
\[
\hat{\sigma}_{i,t}
=
\log p^{\mathrm{plan}}_{i,t}
-
\log p^{\mathrm{base}}_{i,t}.
\]
Redder tokens become more predictable under plan conditioning, while bluer tokens become less predictable.

\begin{figure}[h]
    \centering
    \includegraphics[width=\linewidth]{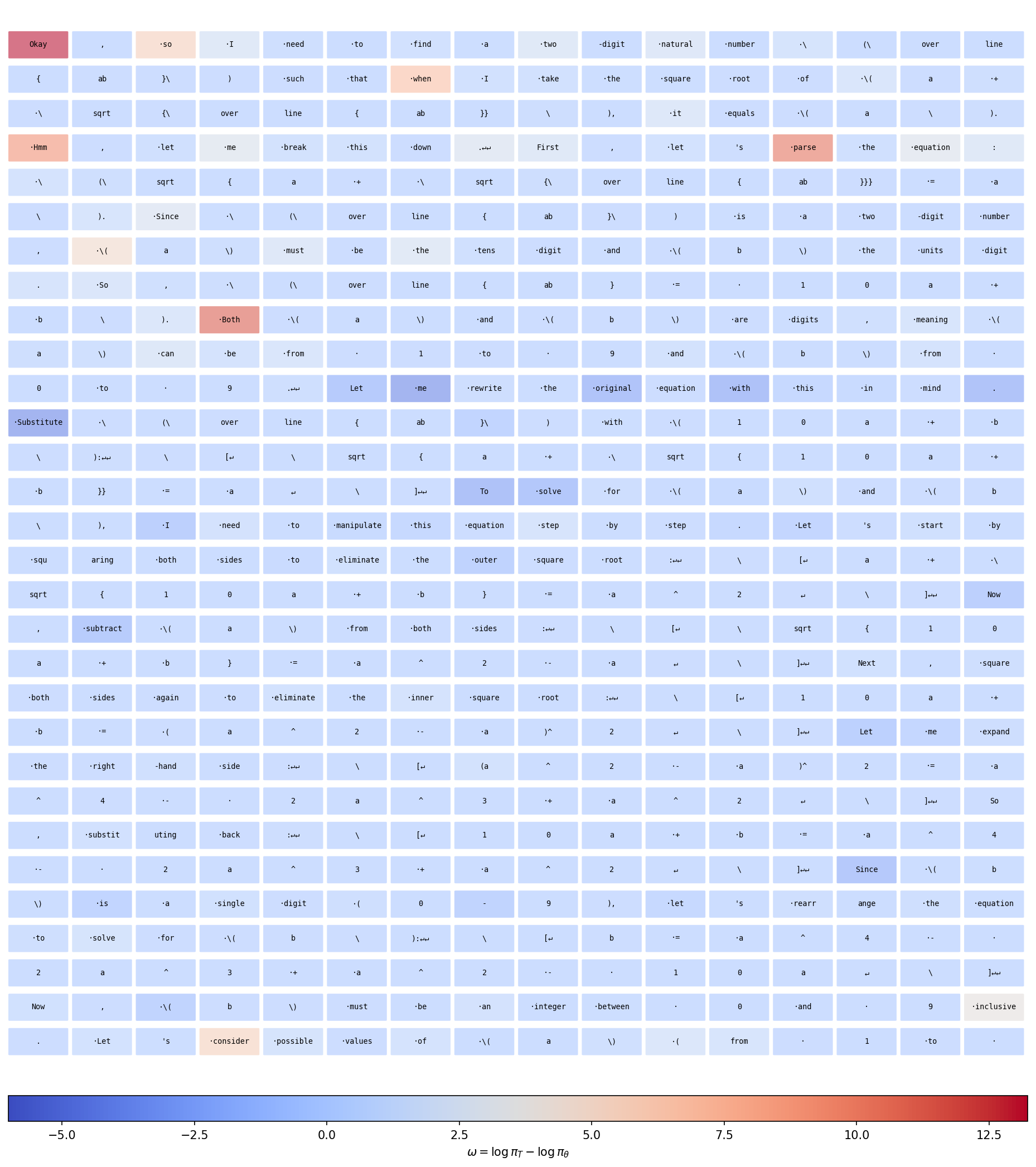}
    \caption{Full token salience visualization, part 1.}
    \label{fig:full_token_salience_1}
\end{figure}

\begin{figure}[p]
    \centering
    \includegraphics[width=\linewidth]{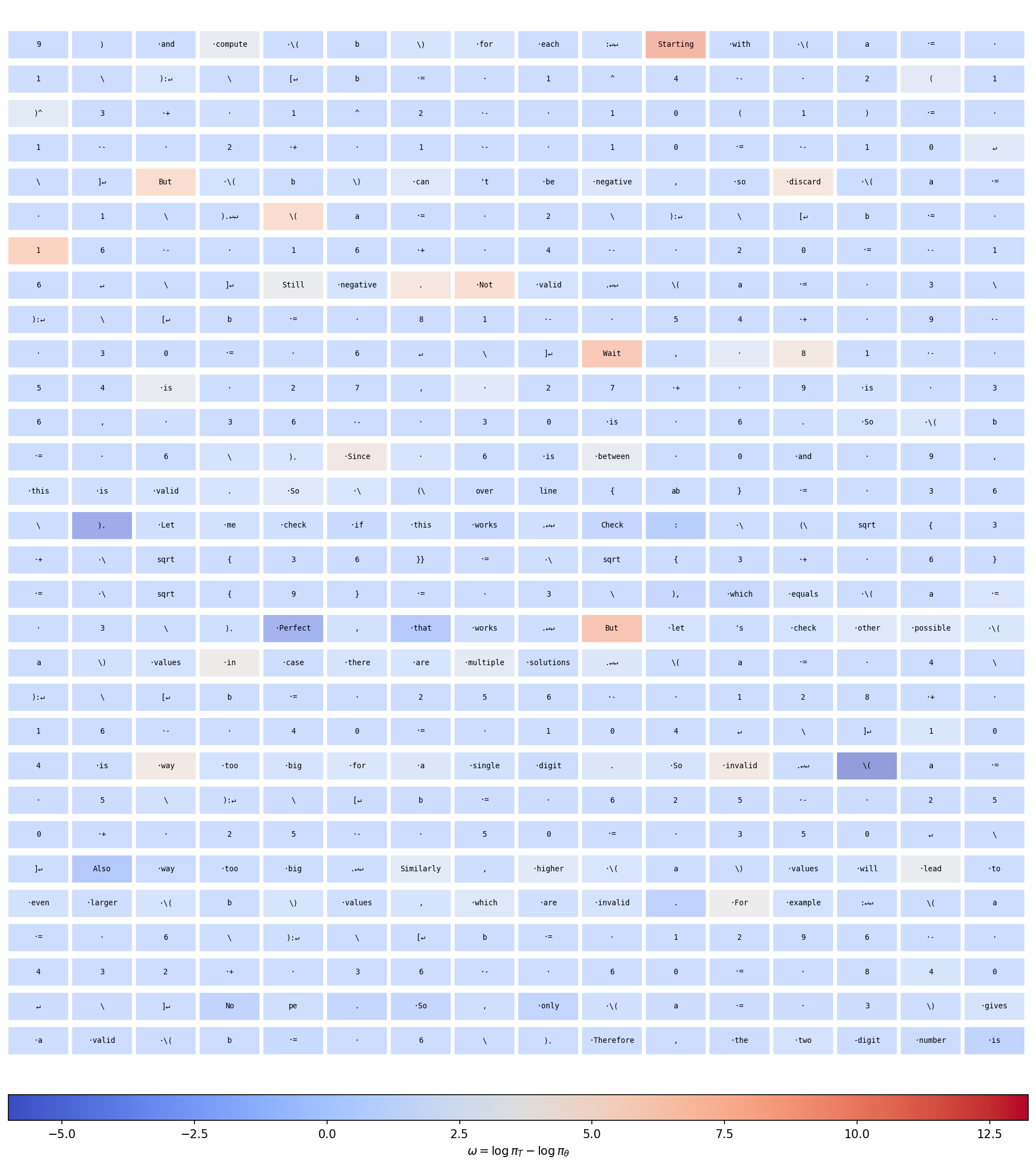}
    \caption{Full token salience visualization, part 2.}
    \label{fig:full_token_salience_2}
\end{figure}

\begin{figure}[p]
    \centering
    \includegraphics[width=\linewidth]{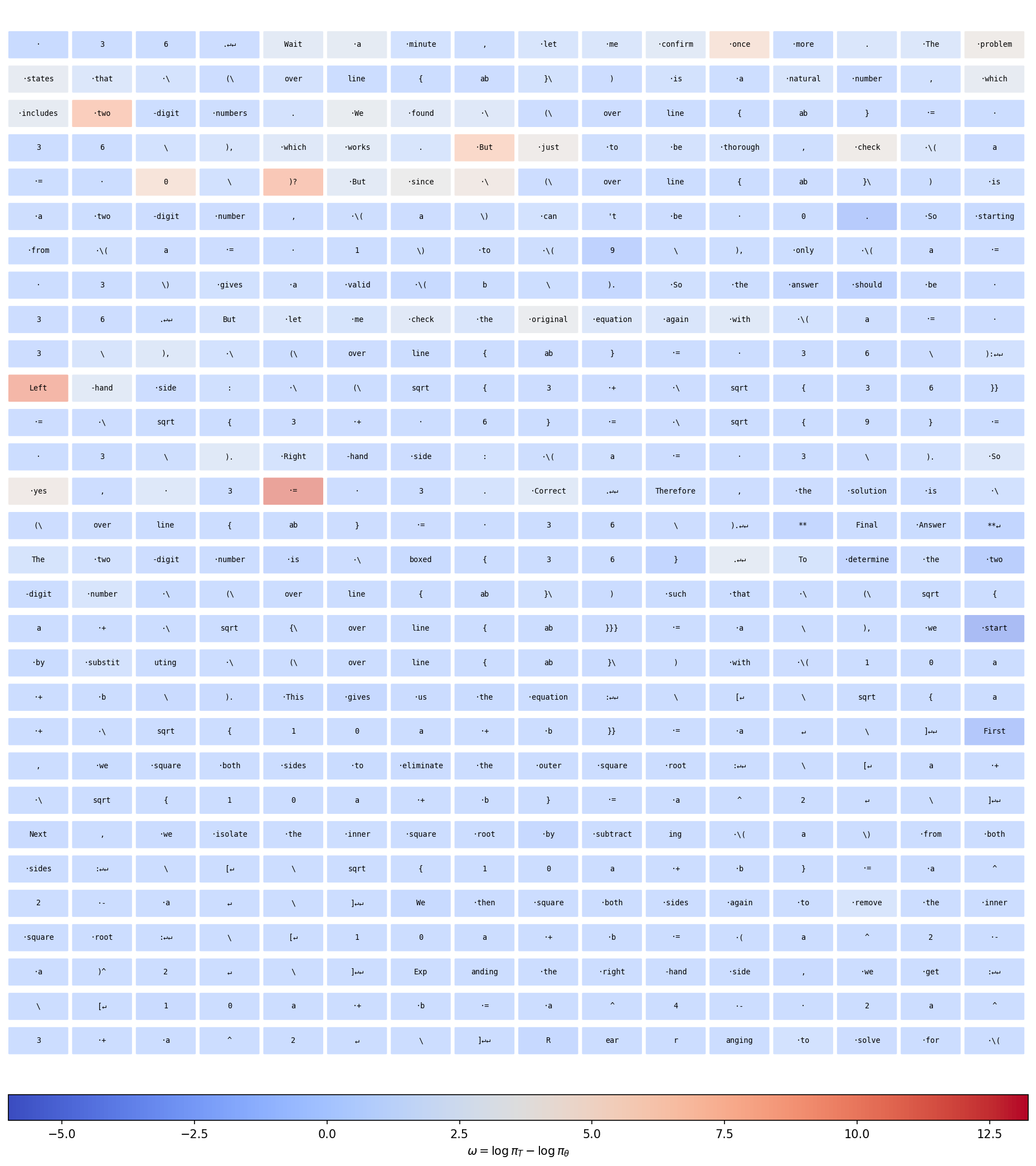}
    \caption{Full token salience visualization, part 3.}
    \label{fig:full_token_salience_3}
\end{figure}

\begin{figure}[p]
    \centering
    \includegraphics[width=\linewidth]{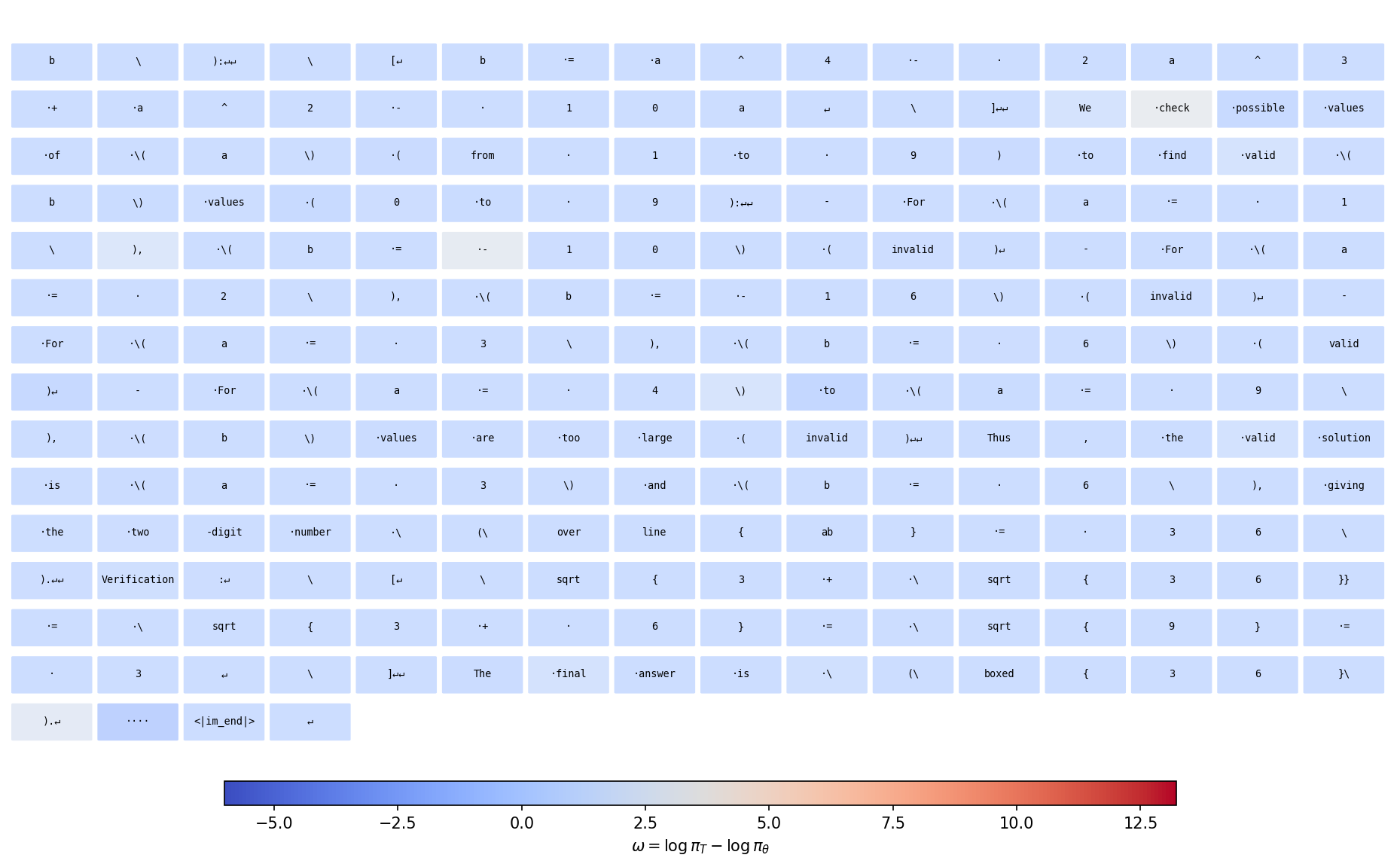}
    \caption{Full token salience visualization, part 4.}
    \label{fig:full_token_salience_4}
\end{figure}

\clearpage


\section{Proofs for Section~\ref{sec:theoretical_analysis}}
\label{app:regret_bound}

Definitions~\ref{def:ground_truth_solution_plan}--\ref{def:plan_salience}, the oracle weight~\eqref{eq:oracle_weight}, and Assumptions~\ref{assump:stability}--\ref{assump:bounded_score} are given in the main text.

\begin{proof}[Proof of Lemma~\ref{lem:plan_to_token}]
Fix any token position $t$. By the $L$-Lipschitz assumption on $h \mapsto \log\pi_\theta(\cdot \mid h)$,
\[
\bigl|\log\pi_\theta(y_{i,t}^\star \mid \hat{s}_i, q_i, y_{i,<t}^\star) - \log\pi_\theta(y_{i,t}^\star \mid S_i, q_i, y_{i,<t}^\star)\bigr|
\le L\,\|h_\theta(q_i, y_{i,<t}^\star, \hat{s}_i) - h_\theta(q_i, y_{i,<t}^\star, S_i)\|.
\]
Assumption~\ref{assump:stability} bounds the right-hand side by $L\gamma$ pointwise for all $t$.
\end{proof}

\begin{proof}[Proof of Theorem~\ref{thm:regret_bound}]
Let $p_{i,t} = p^{\mathrm{base}}_{i,t}$ and $q_{i,t} = p^{\mathrm{plan}}_{i,t}$ as defined in~\eqref{eq:base_plan_prob}, and let $o_{i,t} = \pi_\theta(y_{i,t}^\star \mid S_i, q_i, y_{i,<t}^\star)$ be the oracle plan-conditioned probability. All lie in $(0,1]$, so their logs are non-positive. From~\eqref{eq:sort_weight_family} and~\eqref{eq:oracle_weight},
\[
\omega_{\beta,i,t} = p_{i,t}^{1-\beta} q_{i,t}^{\beta}, \qquad
\omega_{\beta,i,t}^\star = p_{i,t}^{1-\beta} o_{i,t}^{\beta}.
\]
Write $\ell_q = \log q_{i,t}$, $\ell_o = \log o_{i,t}$. Then
\begin{align}
\bigl|\omega_{\beta,i,t} - \omega_{\beta,i,t}^\star\bigr|
&= p_{i,t}^{1-\beta} \cdot \bigl|e^{\beta\ell_q} - e^{\beta\ell_o}\bigr|
\le p_{i,t}^{1-\beta} \cdot \beta\,|\ell_q - \ell_o|
\le \beta L\gamma\,p_{i,t}^{1-\beta},
\end{align}
where the first inequality uses the mean value theorem on $f(x)=e^{\beta x}$: $|f(a)-f(b)|=\beta e^{\beta\xi}|a-b|\le \beta|a-b|$ since $\beta\in[0,1]$ and $\xi\le\max(a,b)\le0$. The second inequality applies Lemma~\ref{lem:plan_to_token} pointwise, noting that $|\ell_q - \ell_o| = |\log p^{\mathrm{plan}}_{i,t} - \log \pi_\theta(y_{i,t}^\star \mid S_i, q_i, y_{i,<t}^\star)| \le L\gamma$. For $\beta=\tfrac{1}{2}$ this gives $|\omega_{i,t}-\omega_{i,t}^\star|\le (L\gamma/2)\sqrt{p_{i,t}}$.

For the gradient: $\nabla_\theta \mathcal{L}_{\mathrm{SORT},\beta} = -\frac{1}{|\mathcal{S}|}\sum_{(i,t)} \omega_{\beta,i,t} \nabla_\theta \log p_{i,t}$, and identically for $\mathcal{L}^\star_\beta$ with $\omega_{\beta,i,t}^\star$. Hence
\begin{align}
\bigl\|\nabla_\theta \mathcal{L}_{\mathrm{SORT},\beta} - \nabla_\theta \mathcal{L}^\star_\beta\bigr\|
&\le \frac{1}{|\mathcal{S}|}\sum_{(i,t)} |\omega_{\beta,i,t} - \omega_{\beta,i,t}^\star| \cdot \|\nabla_\theta \log p_{i,t}\| \\
&\le \beta L\gamma \cdot \frac{1}{|\mathcal{S}|}\sum_{(i,t)} p_{i,t}^{1-\beta} \, \|\nabla_\theta \log p_{i,t}\|
\le \beta L\gamma G,
\end{align}
where the last step uses $p_{i,t}^{1-\beta} \le 1$ and $\|\nabla_\theta \log p_{i,t}\| \le G$ from Assumption~\ref{assump:bounded_score}.
\end{proof}



\end{document}